\title{Reasoning with Justifiable Exceptions in Contextual Hierarchies (Appendix)}
\author{
	Loris Bozzato\inst{1} \and Luciano Serafini\inst{1} \and Thomas Eiter\inst{2}
}
\institute{
  Fondazione Bruno Kessler, 
  Via Sommarive 18, 38123 Trento, Italy \\
  \and
  Institut f\"{u}r Informationssysteme, Technische Universit\"{a}t Wien,\\
  Favoritenstra\ss e 9-11, A-1040 Vienna, Austria\\
  \medskip
 \email{\texttt{\{bozzato,serafini\}@fbk.eu}, eiter@kr.tuwien.ac.at}
}
\def\isa{\sqsubseteq}
\def\CKB{\mathfrak{K}}
\def\I{\mathcal{I}}
\newcommand{\Pair}[2]{\left\langle#1,#2\right\rangle}
\renewcommand{\vec}[1]{\mathbf{#1}}
\newcommand{\ee}{{\vec{e}}}
\newcommand{\SROIQ}{\mathcal{SROIQ}}
\newcommand{\NI}{\mathrm{NI}}
\newcommand{\NR}{\mathrm{NR}}
\newcommand{\NC}{\mathrm{NC}}
\newcommand{\IC}{\mathfrak{I}}
\newcommand{\K}{\mathcal{K}}
\def\self{\mathrm{Self}}
\newcommand{\stru}[1]{\langle #1 \rangle}
\newcommand{\non}{\neg}
\newcommand{\subs}{\sqsubseteq}
\newcommand{\Lcal}{{\cal L}}
\newcommand{\cov}[1]{\preceq}
\renewcommand{\vec}[1]{\mathbf{#1}}
\newcommand{\CAS}{\mathit{CAS}}
\newcommand{\OVR}{\mathit{OVR}}
\newcommand{\casmap}{\chi}
\newcommand{\nop}[1]{}
\newcommand{\SROIQrl}{\mathcal{SROIQ}\text{-RL}}
\newcommand{\KB}{\mathrm{K}} 
\newcommand{\mlc}{\ml{c}}
\newcommand{\mlm}{\sml{m}}
\newcommand{\smlc}{\sml{c}}
\newcommand{\scriptmlm}{\scriptml{m}}
\newcommand{\subClass}{{\tt subClass}}
\newcommand{\ptop}{{\tt top}}
\newcommand{\subEx}{{\tt subEx}}
\newcommand{\subRole}{{\tt subRole}}
\newcommand{\supEx}{{\tt supEx}}
\newcommand{\pbot}{{\tt bot}}
\newcommand{\subConj}{{\tt subConj}}
\newcommand{\subRChain}{{\tt subRChain}}
\newcommand{\pDis}{{\tt dis}}
\newcommand{\pInv}{{\tt inv}}
\newcommand{\pIrr}{{\tt irr}}
\newcommand{\nom}{{\tt nom}}
\newcommand{\cls}{{\tt cls}}
\newcommand{\rol}{{\tt rol}}
\newcommand{\peq}{{\tt eq}}
\newcommand{\subEval}{{\tt subEval}}
\newcommand{\subEvalR}{{\tt subEvalR}}
\newcommand{\supForall}{{\tt supForall}}
\newcommand{\supLeqOne}{{\tt supLeqOne}}
\newcommand{\eval}{\textsl{eval}}
\newcommand{\N}{\boldsymbol{\mathsf{N}}}
\newcommand{\ml}[1]{\mathsf{#1}} 
\newcommand{\sml}[1]{{\mbox{\small\ensuremath{\ml{#1}}}}} 
\newcommand{\scriptml}[1]{{\mbox{\scriptsize\ensuremath{\ml{#1}}}}} 
\newcommand{\default}{{\mathrm D}}
\newcommand{\pprec}{{\tt prec}}
\newcommand{\ovr}{{\tt ovr}}
\newcommand{\naf}{\mathop{\tt not}}
\newcommand{\grd}{\mathit{grnd}}
\newcommand{\Body}{\mathit{Body}}
\newcommand{\rif}{\leftarrow}
\renewcommand{\ee}{\mathbf{e}}
\newcommand{\insta}{{\tt insta}}
\newcommand{\instd}{{\tt instd}}
\newcommand{\triplea}{{\tt triplea}}
\newcommand{\tripled}{{\tt tripled}}
\newcommand{\ninsta}{{\tt ninsta}}
\newcommand{\ntriplea}{{\tt ntriplea}}
\newcommand{\unsat}{{\tt unsat}}
\newcommand{\test}{{\tt test}}
\newcommand{\testf}{{\tt test\_fails}}
\newcommand{\nlit}{{\tt nlit}}
\newcommand{\nrel}{{\tt nrel}}
\newcommand{\main}{\ml{main}}
\newcommand{\smlmain}{\sml{main}}
\newcommand{\definst}{{\tt def\_insta}}
\newcommand{\deftriple}{{\tt def\_triplea}}
\newcommand{\defninst}{{\tt def\_ninsta}}
\newcommand{\defntriple}{{\tt def\_ntriplea}}
\newcommand{\defsubs}{{\tt def\_subclass}}
\newcommand{\defsubcnj}{{\tt def\_subcnj}}
\newcommand{\defsubex}{{\tt def\_subex}}
\newcommand{\defsupex}{{\tt def\_supex}}
\newcommand{\defsupforall}{{\tt def\_supforall}}
\newcommand{\defsupleqone}{{\tt def\_supleqone}}
\newcommand{\defsubr}{{\tt def\_subr}}
\newcommand{\defsubrc}{{\tt def\_subrc}}
\newcommand{\defdis}{{\tt def\_dis}}
\newcommand{\definv}{{\tt def\_inv}}
\newcommand{\defirr}{{\tt def\_irr}}
\newcommand{\SROIQrld}{\mathcal{SROIQ}\text{-RLD}}
\newcounter{myenumctr}
\newcommand{\Cl}{\mathfrak{C}}
\newcommand{\level}{{\tt level}}
\begin{document}

\maketitle

\begin{abstract}
   This paper is an appendix to the paper ``Reasoning with Justifiable Exceptions in Contextual Hierarchies'' by 
   Bozzato, Serafini and Eiter, 2018~\cite{BozzatoSE:18a}.
   It provides further details on the language, the complexity results and 
   the datalog translation introduced in the main paper.
\end{abstract}

\section{$\SROIQ$ syntax and semantics}

Table~\ref{tab:sroiq} presents the syntax and semantics of $\SROIQ$
operators and axioms. In the table,
$A$ is any atomic concept, $C$ and $D$ are any concepts, $P$ and $R$ are any
atomic roles (and for $^*$ {\em simple} in the context of a knowledge base $\K$),
$S$ and $Q$ are any (possibly complex) roles, $a$ and $b$ are any individual constants,
and $n$ stands for any positive integer.

\begin{table}[ht] \scriptsize
\caption{Syntax and Semantics of $\SROIQ$}
\label{tab:sroiq}
\begin{center}
\begin{tabular}{l@{\quad}l}
\begin{tabular}[t]{@{\,}l@{\;}|@{\;}l@{\;}|@{\;}l@{\,}}
\hline
\hline
\textbf{Concept constructors} & Syntax	& Semantics \\
\hline&\mbox{}\\[-2ex]
atomic concept		& $A$		& $A^\I$ \\
top concept		&  $\top$  &  $\Delta^\I$  \\
bottom concept  & $\bot$ & $\emptyset$ \\
complement		& $\neg C$	& $\Delta^\I \setminus C^\I$ \\
intersection		& $C\sqcap D$	& $C^\I \cap D^\I$ \\
union     		& $C\sqcup D$	& $C^\I \cup D^\I$ \\ 
existential restriction	& $\exists R.C$	& $\left\{x\in\Delta^\I \,\left|\,
                                             \begin{array}{@{}l@{}}
					       \exists y.\Pair{x}{y}\in R^\I\\
					      \phantom{ \exists y.} \land\, y\in C^\I
					     \end{array}\right.\mkern-3mu
					   \right\}$ \\[2ex]
self restriction$^*$	& $\exists R.\self$
					  & $\left\{x\in\Delta^\I\,\left|\, \Pair{x}{x}\in
					    R^\I\right.\mkern-2mu\right\}$ \\[2ex]
universal restriction	& $\forall R.C$	& 
               $\left\{x\in\Delta^\I \,\left|\,
                 \begin{array}{@{}l@{}}
					       \forall y.\Pair{x}{y}\in R^\I\\
					       \phantom{\forall y.}  \rightarrow y\in C^\I
					     \end{array}\right.\mkern-3mu
					   \right\}$\\[3ex]
min.\,card.\,restriction$^*$ &
			  ${\geqslant}n R.C$
			  		  & $\left\{x\in\Delta^\I \,\left|\,
					     \begin{array}{@{}l@{}}
					      \sharp\{y\mid\Pair{x}{y} \in R^\I\\
					      \phantom{\sharp\{}\land\,y\in C^\I\} \geq n
					     \end{array}\right.\mkern-3mu
					    \right\}$\\[2ex]
max.\,card.\,restriction &
			  ${\leqslant}n R.C$
			  		  & $\left\{x\in\Delta^\I \,\left|\,
					     \begin{array}{@{}l@{}}
					      \sharp\{y\mid\Pair{x}{y} \in R^\I\\
					      \phantom{\sharp\{}\land\,y\in C^\I\} \leq n
					     \end{array}\right.\mkern-3mu
					    \right\}$\\[2ex]
 cardinality\,restriction$^*$ &
			  $=n R.C$
			  		  & $\left\{x\in\Delta^\I \,\left|\,
					     \begin{array}{@{}l@{}}
					      \sharp\{y\mid\Pair{x}{y} \in R^\I\\
					      \phantom{\sharp\{}\land\,y\in C^\I\} = n
					     \end{array}\right.\mkern-3mu
					    \right\}$\\[2ex]
nominal			& $\{a\}$ 	& $\big\{a^\I\big\}$\\[.5ex]
\hline
\multicolumn{3}{l}{}\\[1.5ex]
\end{tabular}&\begin{tabular}[t]{@{\,}l@{\;}|@{\;}l@{\;}|@{\;}l@{\,}}
\hline
\hline
\textbf{Role constructors}	& Syntax	& Semantics \\
\hline&\mbox{}\\[-2ex]
atomic role		& $R$		& $R^\I$\\
inverse role		& $R^-$		& $\left\{\Pair{y}{x} \,\left|\,
                                           \Pair{x}{y}\in R^\I\right.\mkern-2mu\right\}$\\[.2ex]
role composition	& $S{\circ}Q$	& $\left\{\Pair{x}{z} \,\left|\,
                                           \Pair{x}{y}\in S^\I, \Pair{y}{z}\in Q^\I\right .\mkern-2mu\right\}$\\[.5ex]
\hline
\hline
\textbf{Axioms}	& Syntax	& Semantics \\
\hline&\mbox{}\\[-2ex]
concept inclusion 
	& $C\isa D$	& $C^\I \subseteq D^\I$\\
concept definition	& $C\equiv D$	& $C^\I = D^\I$\\
role inclusion
	& $S \isa R$	& $S^\I \subseteq R^\I$\\
role disjointness$^*$	& $\mathrm{Dis}(P,R)$ & $P^\I\cap R^\I = \emptyset$ \\
reflexivity assertion$^*$   & $\mathrm{Ref}(R)$ & $\{ \Pair{x}{x}|\, x \in \Delta^\I \} \subseteq R^\I$ \\
irreflexivity assertion$^*$   & $\mathrm{Irr}(R)$ & $R^\I \cap \{ \Pair{x}{x}|\, x \in \Delta^\I \} = \emptyset$\\
symmetry assertion   & $\mathrm{Sym}(R)$ & $\Pair{x}{y} \in R^\I \Rightarrow \Pair{y}{x} \in R^\I$ \\
asymmetry assertion$^*$   & $\mathrm{Asym}(R)$ & $\Pair{x}{y} \in R^\I \Rightarrow \Pair{y}{x} \notin R^\I$ \\
transitivity assertion   & $\mathrm{Tra}(R)$ & 
$\{\Pair{x}{y}\!,\Pair{y}{z}\} \subseteq R^\I \Rightarrow \Pair{x}{z}\in R^\I$\\[1ex]
concept assertion 	& $C(a)$	& $a^\I \in C^\I$ \\
role assertion 		& $R(a,b)$	& $\Pair{a^\I}{b^\I} \in R^\I$ \\
negated role assertion & $\neg R(a,b)$	& $\Pair{a^\I}{b^\I} \notin R^\I$\\
equality assertion & $a = b$	& $a^\I = b^\I$\\
inequality assertion & $a \neq b$	& $a^\I \neq b^\I$\\[.5ex]
\hline
\end{tabular}
\end{tabular}
\end{center}
\vspace{-\baselineskip}
\end{table}

\section{Reasoning and complexity: more details}

In what follows, we assume the setting of  \cite{BozzatoES:18} for the
complexity analysis.

\begin{proposition}
Deciding whether a CAS-interpretation $\IC_{\CAS}$ of a  sCKR $\CKB$
is a CKR-model is coNP-complete. 
\end{proposition}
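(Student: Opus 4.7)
The plan is to establish both bounds separately. First, I would unfold what it means for $\IC_{\CAS}$ to be a CKR-model of $\CKB$: roughly, each local interpretation must satisfy all strict axioms and all defeasible axiom instances that are \emph{not} overridden by $\CAS$, and every override $(c,\alpha,\vec{e}) \in \CAS$ must be \emph{justified}, i.e.\ forcing $\alpha(\vec{e})$ to hold together with the remaining imposed axioms would produce a clash at context $c$. The complementary problem — deciding that $\IC_{\CAS}$ is \emph{not} a CKR-model — then admits a short certificate of one of two kinds: either (a) a context $c$, a defeasible axiom $\alpha$ and a tuple $\vec{e}$ such that $\alpha(\vec{e})$ is violated in the local interpretation for $c$ while $(c,\alpha,\vec{e}) \notin \CAS$, or (b) an override $(c,\alpha,\vec{e}) \in \CAS$ that is not justified, witnessed by a candidate CAS-interpretation in which $\alpha(\vec{e})$ is consistent with the local knowledge induced by $\CAS \setminus \{(c,\alpha,\vec{e})\}$. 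Case (a) amounts to model-checking a single axiom instance against a fixed DL interpretation and is polynomial; case (b) likewise reduces to polynomial-size local checks given that the candidate interpretation is guessed. Hence the complement lies in NP and the original problem is in coNP.

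For coNP-hardness I would reduce from propositional UNSAT. Given a 3-CNF $\varphi$ over variables $x_1,\dots,x_n$ with clauses $C_1,\dots,C_m$, I would construct a sCKR $\CKB_\varphi$ with one context whose signature contains individuals $o_1,\dots,o_n$ (one per variable), auxiliary concepts for truth values, and a family of defeasible axioms together with an intended $\CAS$ so that (i) each override in $\CAS$ encodes the selection of a literal in one clause, and (ii) the absence of that override forces, via the strict axioms, the configuration falsifying the corresponding literal. Fixing $\IC_{\CAS}$ to interpret concepts according to a fixed ``canonical'' assignment sketch, the justification test for each override becomes equivalent to the non-existence of a satisfying truth assignment for the clauses touched by that override; globally, $\IC_{\CAS}$ is a CKR-model iff $\varphi$ is unsatisfiable.

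The main obstacle will be calibrating the hardness reduction: the sCKR and the CAS-interpretation must be such that all the ``easy'' conditions (local satisfaction of non-overridden axioms) are automatically true by construction, so that the only nontrivial condition left is the justification of each override, and that this justification mirrors exactly the non-satisfaction of $\varphi$. Care is needed to keep $\CKB_\varphi$ within a tractable fragment of $\SROIQ$, so that the coNP-hardness comes genuinely from the exception-justification semantics and not from the underlying DL reasoning. The coNP upper bound, by contrast, should reduce to a clean decomposition of the CKR-model condition into independent local tests, each of polynomial size, so that the nondeterministic guess in the complement is a single offending axiom instance or a single unjustified override together with its small local witness.
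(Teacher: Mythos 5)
There is a genuine gap, and it concerns where the coNP-hardness actually lives. In this paper a CKR-model is not merely a justified CAS-model: it is a justified CAS-model to which no other justified CAS-model is \emph{preferred}. The paper's argument for membership is precisely that checking whether $\IC_{\CAS}$ is a justified CAS-model is \emph{polynomial} (using the techniques of the underlying CKR framework), and that the only nondeterministic ingredient in refuting $\IC_{\CAS}$ is guessing and checking a strictly preferred justified CAS-model $\IC'_{\CAS}$. Your decomposition of the complement into (a) a violated non-overridden axiom instance and (b) an unjustified override never mentions the preference condition at all, so it certifies non-membership in the wrong set: what you bound is the complexity of justified-CAS-model checking, which is already in P and therefore cannot be coNP-hard (unless coNP $=$ P).

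The same misdiagnosis undermines your hardness reduction. You aim to make the \emph{justification} of each override equivalent to unsatisfiability of $\varphi$, but in this framework justification is witnessed by small clashing sets and is polynomially checkable, so it cannot carry the coNP-hardness. The paper instead reduces from (a restricted version of) UNSAT via the preference relation: one constructs the sCKR from the $\smlc$-entailment reduction, hands the verifier the \emph{canonical} CAS-interpretation whose clashing assumption overrides the preferred axiom $V \sqsubseteq T$ on every atom (i.e., makes the uniformly least-preferred choice), and observes that this interpretation is a CKR-model iff no justified CAS-model making a better choice exists, which holds iff the formula is unsatisfiable. To repair your proof you would need to (i) add to your complement certificate the guess of a preferred justified CAS-model, and (ii) redirect the reduction so that satisfying assignments of $\varphi$ correspond to justified CAS-models that are strictly preferred to the fixed input interpretation, rather than to failures of justification.
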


Informally, $\IC_{\CAS}$  can be refuted if it is not a justified
CAS-model of $\CKB$, which  can be checked in polynomial time using
the techniques in \cite{BozzatoES:18}, or some preferred model 
$\IC'_{\CAS}$ exists; the latter can be guessed and checked in
polynomial time. 
The coNP-hardness is shown, already under data complexity, by a
reduction from a restricted version of UNSAT. We shall discuss in the
context of $\smlc$-entailment under data complexity below. 


\begin{theorem}
Given a ranked sCKR $\CKB$, a context name $\smlc$ and an axiom $\alpha$,  deciding
whether $\CKB\models \smlc\,{:}\,\alpha$  is $\Delta^p_2$-complete for profile-based preference.
\end{theorem}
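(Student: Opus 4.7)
The plan is to establish separately the $\Delta^p_2 = \mathrm{P}^{\mathrm{NP}}$ upper bound and a matching hardness result.

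For the upper bound, the crucial property of profile-based preference is that preferred CAS-models minimize the override set rank by rank, with higher-priority ranks dominating. This supports a greedy $\mathrm{P}^{\mathrm{NP}}$ algorithm: process the ranks occurring in $\CKB$ in decreasing priority order; at each rank $r$, having already fixed the override choices at strictly higher ranks, use NP oracle queries to decide, default by default, whether a given default $d$ of rank $r$ can be left un-overridden in some justified CAS-extension of the current partial profile. Each such query guesses a complete CAS-interpretation and verifies it is a justified CAS-model compatible with the prior fixed choices, which is polynomial-time checkable by the techniques in \cite{BozzatoES:18}. After polynomially many calls we obtain the canonical rank-wise minimal override profile shared by all preferred models; one further NP query decides whether some preferred CAS-model compatible with this profile refutes $\smlc\,{:}\,\alpha$. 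Hence the problem lies in $\Delta^p_2$.

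For hardness, I would reduce from a standard $\Delta^p_2$-complete problem such as deciding the value of $x_n$ in the lexicographically smallest satisfying assignment of a propositional formula $\phi(x_1,\dots,x_n)$. Given $\phi$, the reduction builds a ranked sCKR $\CKB_\phi$ with a single context $\smlc$ whose local knowledge encodes $\phi$ through a standard translation of propositional constraints into $\SROIQ$ axioms over named individuals representing the variables, together with defaults $d_1,\dots,d_n$ where $d_i$ asserts ``$x_i$ is false'' and is placed at rank $n-i+1$, so that $d_1$ strictly dominates $d_2$, and so on. Under profile-based preference, the rank-wise minimization of overrides corresponds exactly to the greedy bit-by-bit construction of the lex-smallest satisfying assignment: $d_i$ is preserved precisely when setting $x_i=0$ remains consistent with $\phi$ under the higher-priority choices already fixed. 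The query $\smlc\,{:}\,\alpha$ then asks whether $x_n$ is true in that assignment, providing the required reduction.

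The main obstacle is the careful alignment of profile-based rank-wise minimization with lexicographic assignment order in the hardness reduction: each variable must sit in its own rank, and the defaults must be crafted so that their \emph{justified} non-override faithfully mirrors consistently setting the corresponding variable to $0$, without introducing spurious overrides from the DL encoding of $\phi$. A secondary subtlety in the upper bound is distinguishing between the local per-rank minimization (which is NP-easy in the greedy loop) and global preferred-ness; here one relies on the fact that, by the lexicographic structure of profile comparison, the greedy outcome coincides with the global minimum, so polynomially many NP calls do suffice.
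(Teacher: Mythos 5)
Your overall architecture matches the paper's ($\mathrm{P}^{\mathrm{NP}}$ upper bound by iteratively fixing the optimal profile with an NP oracle, hardness from deciding a bit of the lexicographically optimal satisfying assignment), but both halves have a concrete gap. In the upper bound, profile-based preference compares the \emph{number} of overridings per level lexicographically, so the object to be computed greedily is the numeric profile $p^*=(l^*_n,\ldots,l^*_0)$, which the paper obtains by asking, level by level, for each candidate value $v$ whether $l_j=v$ is attainable given the already-fixed higher components. Your variant instead decides \emph{default by default} whether each individual default can be left un-overridden. That computes an inclusion-maximal set of preserved defaults, not a cardinality-maximal one: if at some rank the justified models either preserve only $d_1$ or preserve $\{d_2,d_3\}$, greedily committing to $d_1$ first locks you into two overridings where one would have sufficed, so the resulting ``canonical profile'' need not be the profile of any preferred model. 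You need to quantify over counts (or otherwise argue count-optimality), not over individual defaults.

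In the hardness reduction, two issues. First, in a ranked sCKR the rank of a defeasible axiom is induced by the level of the context that hosts it, so you cannot place $n$ defaults at $n$ distinct ranks inside a single context; the paper uses a chain $\smlc_0 < \smlc_1 < \cdots < \smlc_{n+1}$ of contexts precisely to realize one rank per variable. Second, with a single default ``$x_i$ is false'' per variable, overriding must be \emph{justified} by an entailed clashing set, and nothing in your encoding forces such an entailment when $\phi$ also admits models with $x_i=1$; so CAS-models need not correspond to assignments at all. The paper resolves this with a pair of opposing defaults $D(V_i \isa F)$ (at the top context) and $D(V_i \isa T)$ (at context $\smlc_i$) together with the strict axioms $V_i(x_i)$ and $T \sqcap F \isa \bot$, which force exactly one of the two to be overridden for each variable and yield the needed 1--1 correspondence between justified CAS-models and satisfying assignments; the context ordering then makes profile-based preference select the lexicographically optimal assignment. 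You flag this alignment as ``the main obstacle'' but do not supply the mechanism that closes it.
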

\begin{proof}[Sketch]
%
For profile-based comparison, we can compute the lexicographic 
maximum profile $p^*$ of a CKR-model by
extending a partial profile $(l^*_n,l^*_{n-1},\ldots,l^*_i)$,
$i=n,n-1,\ldots,0$ using an NP oracle in polynomial time;
asking for each possible value $v$ whether $l_j=v$ is possible.
We then can check with the NP oracle whether every justified CAS-model $\IC_{\CAS}$ having
this profile fulfills $\IC_{\CAS} \not\models \smlc\,{:}\,\alpha$.

The $\Delta^p_2$-hardness is shown by a reduction from
deciding the last bit $x_n$ of the lexicographic maximum 
satisfying assignment of a SAT instance $E = \bigwedge_{i=1}^m \gamma_i$ over propositional atoms $X=\{x_1,\ldots,x_n\}$.

Without loss of generality,  $E$ is a 3SAT instance (with duplicate
literals allowed) and each clause $\gamma_i$ in $E$ is either positive or negative.

Then we construct $\CKB$ as follows. Let 
$V_i$, $i=1,\ldots,n$
 and $F,T,A$ be concepts, $P_1,P_2,P_3,N_1,N_2,N_3$ be
roles, and $x_1,\ldots,x_n,$ $c_1,\ldots,c_m$ be individual constants.
We use totally ordered contexts $\smlc_0 < \smlc_1< \cdots <
 \smlc_{n+1}$. The  knowledge bases of the contexts contain the following axioms

\begin{itemize}
\itemsep=0pt
 \item the knowledge base of $\smlc_{n+1}$ contains the  defeasible axioms
$D(V_i \isa F)$ for all $i=1,\ldots,n$

 \item the knowledge base of $\smlc_i$, $i=1,\ldots,n$ contains
 the defeasible axiom $D(V_i \isa T)$
    
 \item the knowledge base of  $\smlc_0$ that contains the inclusion axioms: 

$T\sqcap F \isa \bot$,\quad $T \isa A$,\quad $F \isa A$,\quad $\bigsqcap_{j=1}^3\exists N_j.(T\sqcap A) \isa \bot$,\quad and $\bigsqcap_{j=1}^3\exists P_j.(F\sqcap A) \isa \bot$,

 \noindent
 and the assertions 

 \item $V_h(x_h)$, $h=1,\ldots,n$, and 
 \item $P_j(c_i,x_{i_j})$ for $i=1,\ldots,m$ and $j=1,2,3$ such that
     the clause $\gamma_i$ is of form $x_{i_1}\lor x_{i_2}\lor x_{i_3}$,
 \item $N_j(c_i,x_{i_j})$ for $i=1,\ldots,m$ and $j=1,2,3$ such that
     the clause $\gamma_i$ is of form $\neg x_{i_1}\lor \neg x_{i_2}\lor \neg x_{i_3}$.
\end{itemize}

Intuitively, we must at context $\smlc_0$ make for each $x_h$ an exception to
either $V_i\isa F$ or $V_i\isa T$; the respective single minimal clashing set is
$\{ V_i(x_h), \neg F(x_h) \}$ resp.\ $\{ V_i(x_h), \neg T(x_h) \}$. 

One can show that the justified CAS-models $\IC_{\CAS}$ of the CKR correspond 1-1 to the
satisfying assignments $\sigma$ of $E$.  Furthermore, under
profile-based preference, keeping $V_i \sqsubseteq T$ is preferred
over keeping $V_i \sqsubseteq F$, and thus by the context ordering the
lexicographic maximum $\sigma^*$ that satisfies $E$ will be reflected in
every non-preferred model. Consequently, 
$\CKB\models \smlc_0:T(x_n)$ holds iff $\sigma(x_n)=$ true.
\end{proof}

\begin{theorem}
Deciding where an sCKR $\CKB$ entails a Boolean CQ $\gamma$ is $\Pi^p_2$-complete
for profile-based preference.
\end{theorem}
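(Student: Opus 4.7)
The plan is to prove $\Pi^p_2$ membership by placing the complementary problem in $\Sigma^p_2$, and to show matching hardness by reducing from $\forall\exists$-3SAT.

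For the upper bound I consider the complementary problem of deciding whether $\CKB \not\models \smlc\,{:}\,\gamma$ and show that it is in $\Sigma^p_2$. A polynomial-size CAS $\chi$ together with a CAS-interpretation $\IC_{\CAS}$ can be guessed (by the bounded-model property from \cite{BozzatoES:18}), after which an NP oracle is used to verify that (i)~$\IC_{\CAS}$ is a justified CAS-model of $\CKB$, which is in fact a polynomial check using the techniques of \cite{BozzatoES:18}, (ii)~$\IC_{\CAS}$ is preferred, i.e., no justified CAS-model of $\CKB$ has a lexicographically larger profile (existence of a better model is in NP, non-existence is handled by one oracle call), and (iii)~$\IC_{\CAS} \not\models \smlc\,{:}\,\gamma$, i.e., no homomorphism of $\gamma$ into the local interpretation at $\smlc$ exists (existence is in NP, again a single coNP call). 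Hence entailment is in $\Pi^p_2$.

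For the lower bound I reduce from the truth problem of closed QBFs of the form $\Phi = \forall X \exists Y\, \phi(X,Y)$, where $\phi$ is a 3CNF over $X=\{x_1,\ldots,x_n\}$ and $Y=\{y_1,\ldots,y_k\}$. The key idea is to adapt the construction from the preceding theorem so that preferred CAS-models range over all truth assignments of $X$, while the existential quantifier over $Y$ is captured by the variables of the Boolean CQ. Concretely, I reuse concepts $V_i, T, F, A$ and the disjointness $T \sqcap F \isa \bot$ at a context $\smlc_0$, together with the assertions $V_i(x_i)$, but now place all defeasible axioms $D(V_i \isa T)$ and $D(V_i \isa F)$ at the \emph{same} rank. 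Under profile-based preference every CAS-model that records exactly one exception per variable then has the same profile and is therefore preferred, yielding a bijection between preferred CAS-models and truth assignments $\sigma: X \to \{T,F\}$. The clauses of $\phi$ are encoded exactly as in the previous theorem, via non-defeasible role assertions with individuals $c_1,\ldots,c_m$ representing clauses and truth constants $t,f$ with $T(t), F(f)$; $y_j$-literals are represented by literal-position individuals that the CQ will match. The Boolean CQ $\gamma$ then existentially binds variables $Y_1,\ldots,Y_k$ to elements of a distinguished concept $\{t,f\}$ and, using the role structure encoding the clauses together with membership in $T$ and $F$, asserts for each clause the existence of a satisfied literal. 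By the standard conjunctive-query encoding of clause-satisfaction via per-clause ``witness'' variables, $\CKB \models \smlc_0\,{:}\,\gamma$ holds iff for every preferred CAS-model (every assignment $\sigma$ to $X$) there exists an assignment to $Y$ verifying $\phi(\sigma, Y)$, i.e., iff $\Phi$ is true.

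The upper bound is essentially routine once one observes that BCQ entailment over a fixed CAS-interpretation is NP-complete and that the preference check from the preceding theorem is coNP. The main obstacle is the lower-bound construction: one must ensure that (a)~preferred CAS-models correspond \emph{exactly} to assignments of $X$, with no spurious clashing-assumption choices altering the profile, and (b)~the CQ $\gamma$ can simultaneously read the current $\sigma$ from concept membership in $T/F$ and existentially guess a $Y$-assignment, using only conjunctive structure to express per-clause disjunction. Issue (a) is handled by making all $X$-related defeasible axioms cohabit a single context rank, so every satisfying exception pattern produces the same profile; issue (b) is handled by the standard encoding of 3-clause satisfaction through per-position witness individuals in the ABox at $\smlc_0$, kept entirely non-defeasible so that no interaction with the preference machinery occurs.
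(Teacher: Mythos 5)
Your membership argument is the same as the paper's: guess a justified CAS-model $\IC_{\CAS}$ witnessing non-entailment and use an NP oracle to confirm both that no preferred model exists above it and that $\gamma$ has no match, placing the complement in $\Sigma^p_2$. Where you genuinely diverge is the lower bound. The paper disposes of hardness in one line by inheriting it from CQ answering over ordinary CKR (no hierarchy), whereas you build an explicit reduction from $\forall X\exists Y\,\phi(X,Y)$, reusing the $V_i,T,F$ gadget of the preceding theorem but flattening all defeasible axioms to a single rank so that every exception pattern encoding an $X$-assignment yields the same profile and hence every assignment survives as a preferred model; the $\exists Y$ part is then absorbed into the query. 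This buys self-containedness and, notably, makes explicit something the paper's one-liner quietly assumes: that the inherited hardness construction remains valid once profile-based preference is switched on, i.e., that preference does not prune any of the models encoding $\forall$-choices. Your ``same rank'' device is exactly the right way to guarantee this. The cost is that you take on the burden of the CQ gadget, and your sketch leaves its two delicate points at the level of an appeal to ``the standard encoding'': (a) a conjunctive query cannot state a per-clause disjunction directly, so you need per-clause witness individuals enumerating the satisfying value combinations of each clause, with the witness variable of clause $i$ linked by roles to the query variables $Y_{i_1},Y_{i_2},Y_{i_3}$ (this also enforces consistency of the $Y$-assignment across clauses, since each $y_j$ has a single query variable bound to $t$ or $f$); and (b) the query must read $\sigma(x_i)$ off the model, which requires a derivable link (e.g.\ via $T\isa\exists \mathit{Val}.\{t\}$, $F\isa\exists \mathit{Val}.\{f\}$ in the admitted normal form) from the individual $x_i$ to the truth constants. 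Both are standard and realizable in the fragment, so the approach is sound, but a full proof would have to spell them out. A minor point: you speak of checking that no justified model has a lexicographically \emph{larger} profile, while the paper's Lemma~\ref{lem:wc-correctness} phrases preference as the absence of a $\chi'$ with $p(\chi')<p(\chi)$; fix the convention to match, though it does not affect the complexity bound.
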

\begin{proof}[Sketch]
Similarly as for $\smlc$-entailment, a CKR-model $\IC_{\CAS}$ that
does not entail $\gamma$ can be guessed and checked with the help of
an NP oracle (ask whether no CKR-model $\IC'_{\CAS}$ of $\CKB$ exists
that is preferred to $\IC_{\CAS}$ and
whether $\gamma$ is entailed in $\IC_{\CAS}$); note that the profiles
of interpretations are easy to calculate.
The $\Pi^p_2$-hardness is inherited from ordinary CKR.

\end{proof}

\subsection{Data complexity}

Concerning the data complexity, i.e., the CKR $\CKB$ is fixed and only
the assertions in the knowledge modules vary,

\begin{proposition}
Deciding whether a given CAS-interpretation $\IC_{\CAS}$ of a  sCKR $\CKB$
is a CKR-model is coNP-complete under data complexity. 
\end{proposition}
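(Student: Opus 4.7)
The plan is to show both upper and lower bounds in turn. For membership in coNP under data complexity, the argument from the combined complexity proposition carries over verbatim: refuting that $\IC_{\CAS}$ is a CKR-model amounts to either (i) detecting that $\IC_{\CAS}$ is not a justified CAS-model, which is polynomial by the technique of~\cite{BozzatoES:18}, or (ii) guessing a strictly preferred justified CAS-model $\IC'_{\CAS}$ and verifying it in polynomial time. Since the guess and both checks remain polynomial when the schema of $\CKB$ (contexts, TBoxes, RBoxes, defeasible axiom templates, and preference order) is held fixed and only the ABox portions vary, the procedure yields a coNP bound under data complexity as well.

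For coNP-hardness I would reduce from a suitably restricted UNSAT problem, e.g.\ monotone 3-UNSAT where each clause is either entirely positive or entirely negative, which is still coNP-hard. The idea is to recycle the gadget used in the $\Delta^p_2$-hardness proof of $\smlc$-entailment, but with a \emph{fixed} schema: a single variable concept $V$, a single default-false context $\smlc_F$ above a base context $\smlc_0$, a single default-true context $\smlc_T$ above $\smlc_0$, the defeasible inclusions $D(V\isa F)$ and $D(V\isa T)$, the strict axioms $T\sqcap F\isa\bot$, $T,F\isa A$, $\bigsqcap_{j=1}^3\exists N_j.(T\sqcap A)\isa\bot$ and $\bigsqcap_{j=1}^3\exists P_j.(F\sqcap A)\isa\bot$, all independent of the input. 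The input-dependent assertions are $V(x_h)$ for each propositional variable $x_h$, together with $P_j(c_i,x_{i_j})$ or $N_j(c_i,x_{i_j})$ encoding the three literals of each clause $\gamma_i$. By the correspondence established for the combined-complexity reduction, the justified CAS-models of the resulting $\CKB$ are in $1$-$1$ correspondence with the satisfying assignments of the encoded formula $E$.

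I would then exhibit a concrete CAS-interpretation $\IC_{\CAS}$ whose profile is strictly dominated by the profile of the CAS-model associated with any satisfying assignment: for instance, the canonical CAS-interpretation that overrides \emph{every} instance of $D(V\isa T)$ (equivalently, the interpretation encoding the all-false assignment with an appropriately inflated clashing-assumption set). By construction, this $\IC_{\CAS}$ is a justified CAS-model, but it is non-preferred whenever $E$ admits a satisfying assignment. Therefore $\IC_{\CAS}$ is a CKR-model of $\CKB$ if and only if no preferred justified CAS-model exists, if and only if $E$ is unsatisfiable. Since the schema of $\CKB$, the preference relation, and the computation of $\IC_{\CAS}$ from the input are all independent of $E$, this is a legitimate data-complexity reduction.

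The main obstacle is the last step: ensuring the chosen reference interpretation $\IC_{\CAS}$ is (a) justified in the sense of~\cite{BozzatoES:18} on \emph{every} input, so that the only remaining refutation route is the existence of a preferred model, and (b) produced by a fixed, polynomial-time procedure from the assertions alone. Verifying (a) reduces to exhibiting, uniformly in the input, the required justifications (the clashing sets $\{V(x_h),\neg T(x_h)\}$) and checking that the strict axioms in $\smlc_0$ are not violated by the all-false assignment; the monotone restriction on clauses is precisely what makes this local verification possible.
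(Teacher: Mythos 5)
Your membership argument is fine and is exactly what the paper does (it simply inherits the coNP upper bound from the combined-complexity case, noting the guess-and-check remains polynomial). The problem is in the hardness part. You claim that the canonical all-false interpretation $\IC_{\CAS}$ (overriding every instance of $D(V\isa T)$) is ``by construction'' a justified CAS-model on every input, and that the monotone clause restriction guarantees the strict axioms of $\smlc_0$ are not violated. That is false: a \emph{positive} monotone clause $\gamma_i = x_{i_1}\lor x_{i_2}\lor x_{i_3}$ with assertions $P_j(c_i,x_{i_j})$ puts $c_i$ into $\bigsqcap_{j=1}^3\exists P_j.(F\sqcap A)$ under the all-false assignment, hence into $\bot$. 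So your reference interpretation is justified exactly when the all-false assignment satisfies $E$ --- which is incompatible with your own 1--1 correspondence (if $E$ is unsatisfiable there are \emph{no} justified CAS-models at all), and in particular incompatible with reducing from plain UNSAT. As stated, your reduction answers ``not a CKR-model'' both when $E$ is unsatisfiable (reference not justified) and when $E$ has a non-trivial satisfying assignment (reference justified but dominated), so the claimed biconditional ``$\IC_{\CAS}$ is a CKR-model iff $E$ is unsatisfiable'' fails in the direction you need.

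The paper sidesteps exactly this: its hardness argument is extracted from the ODD SAT reduction, where the instances are normalized so that every $E_k$ \emph{is} satisfied by the all-false assignment and the coNP-hard question becomes whether a satisfying assignment \emph{other than} all-false (equivalently, one setting $x_1$ to true) exists. Under that normalization the all-false canonical model is always justified and is a CKR-model iff no better assignment exists, which is ``a variant of UNSAT'' rather than UNSAT itself. To repair your proof you should either reduce from this restricted UNSAT variant (which is still coNP-hard), or add a uniform escape gadget guaranteeing consistency of the reference assignment; the rest of your construction then goes through.
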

\begin{proof}[Sketch]
The membership is inherited from the general case.  
The hardness part follows from the particular reduction of deciding ODD SAT
to $\smlc$-entailment under data complexity, which amounts for 
particular inputs to a reduction from a variant of UNSAT,
and will be discussed in
this context.
\end{proof}

\begin{theorem}
Deciding
whether $\CKB\models \smlc\,{:}\,\alpha$  is $\Delta^p_2[O(\log n)]$-complete for profile-based preference.
under data complexity.
\end{theorem}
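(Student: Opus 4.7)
The plan is to show membership and hardness separately.

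For membership in $\Delta^p_2[O(\log n)]$, observe that since $\CKB$ is fixed the context hierarchy and the set of defeasible axioms are of constant size, so the profile of any justified CAS-model has a constant number of entries $l_i$. On the other hand, each $l_i$ counts clashing sets of a fixed defeasible axiom against varying assertions and is therefore bounded by a polynomial in the data size $n$, needing only $O(\log n)$ bits. Thus the whole lex-max profile $p^*$ is representable with $O(\log n)$ bits and can be computed top-down by binary search: for each level, binary-search the largest value $l^*_i$ such that some justified CAS-model extends the already fixed prefix by $l^*_i$, each step being one NP oracle query. The total is $O(\log n)$ oracle calls, after which one final NP query decides whether $\smlc : \alpha$ fails in some preferred CAS-model with profile $p^*$.

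For $\Delta^p_2[O(\log n)]$-hardness, I will reduce from a canonical problem at this level, namely ODD SAT (the parity of the maximum number of simultaneously satisfiable clauses of a CNF). The fixed part of $\CKB$ consists of a short context chain together with a top-level defeasible rule whose exceptions, for any truth assignment encoded in the data, correspond one-to-one to the unsatisfied clauses of the input formula $\phi$; minimising exceptions under profile-based preference then matches maximising the number of satisfied clauses. The input $\phi$ enters through assertions only, viz.\ individuals for variables and clauses plus role assertions connecting each clause to its literals, so the TBox/RBox structure of $\CKB$ remains fixed and this is a data reduction. A parity gadget placed at a strictly lower context level is designed so that the distinguished query $\smlc_0 : \alpha$ is entailed iff the minimum exception count at the top level is odd.

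The main obstacle I expect is the parity gadget. Unlike in the combined-complexity proof, neither the number of defeasible axioms nor the number of contexts may depend on the input size, so the parity of an unbounded count must be extracted by a fixed axiom schema operating on the data alone. I plan to realise it by a pair of mutually excluding defeasible rules at a lower level whose exception counts, forced by preference, become synchronised modulo two with the top-level optimum; the resulting parity then surfaces as a forced instance of a designated atomic concept on a distinguished individual that $\alpha$ inspects.
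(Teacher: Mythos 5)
Your membership argument is sound and essentially the paper's: with $\CKB$ fixed, the profile has constantly many components, each a count bounded by a polynomial in the data size, so the optimal profile is determined with $O(\log n)$ NP-oracle queries and one further query settles entailment. You realize this by adaptive binary search per component, while the paper uses a constant number of rounds of parallel queries via $\Delta^p_2[O(\log n)]={\rm P^{NP}}|_{\|[k]}$; the two are interchangeable here.

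The hardness direction has a genuine gap, and it sits exactly where you flag it: the parity gadget. Profile-based preference performs a lexicographic comparison of per-level exception counts; it gives you no mechanism for computing, comparing, or propagating a count modulo two, and ``a pair of mutually excluding defeasible rules whose exception counts become synchronised modulo two with the top-level optimum'' is not something the semantics supports --- preference never forces a lower-level count to track the \emph{parity} of a higher-level one, only to be minimal given the higher levels. Your choice of source problem (parity of the maximum number of simultaneously satisfiable clauses of one CNF) makes this obstacle unavoidable: the set of clauses satisfied by an optimal assignment is an arbitrary subset of the data, so its parity cannot be read off by any fixed axiom schema acting locally on the ABox. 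The paper sidesteps this by reducing from a different $\Theta^p_2$-complete version of ODD SAT: a sequence $E_1,\ldots,E_l$ of instances on disjoint variables, normalized (Wagner's trick) so that $E_k$ is satisfiable only if $E_{k-1}$ is. The satisfiable instances then form a prefix, the quantity whose parity matters is the position of a single boundary, and that boundary is detected by a constant TBox: the preferred models set the designated atom $x^k_1$ to $T$ exactly for the satisfiable $E_k$, consecutive designated individuals are linked in the data by a role $C$, and the axioms $T\sqcap\exists C.F\sqsubseteq Y$ together with a collector axiom over $R(a,x^{2k+1}_1)$ derive $O(a)$ precisely when the boundary falls at an odd index. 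To repair your reduction you would essentially have to re-impose this prefix structure on your instance; as written, the parity-extraction step fails.
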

\begin{proof}[Sketch]
The membership in   $\Delta^p_2[O(\log n)]$ is
established by exploiting that\linebreak 
$\Delta^p_2[O(\log n)] = {\rm
P^{NP}}|_{\|[k]}$ (cf.\ \cite{eite-gott-97}): we can compute, with
parallel NP oracle queries, in a constant number of rounds the optimal
profile $p^*=(l^*_n,\ldots,l^*_n)$ of any clashing assumption $\chi$ of a CKR-model, 
as $n$ is constant: in each round, we extend the partial profile
$(l^*_n,\ldots,l^*_{j+1})$ with $l^*_j$, asking for each possible
value $v$ whether $l_j=v$ is possible.  In a last round, we can then
decide with a single oracle call  $\CKB\models \smlc\,{:}\,\alpha$
based on $p^*$.

The  $\Delta^p_2[O(\log n)]$-hardness is shown by a reduction from
deciding whether among given 3SAT instances $E_1,\ldots,E_l$, $l\geq
1$ on disjoint atoms, where duplicate literals in clauses are allowed,
and an odd number of $E_k$ is satisfied by some assignment that does not set
all atoms in $E_k$ to false. 
The $\Delta^p_2[O(\log n)]$-completeness of this problem, which we
refer to as ODD SAT  follows from \cite{wagn-90}.
Without loss of generality, we 
may assume that $E_k$ is only satisfiable if $E_{i-1}$ is, that $l$ is
even, that all $E_k$ have the same number of variables, that the clauses
in them are monotone, and that each satisfying assignment of $E_k = E_k(x^i_1,\ldots,x^i_n)$
sets either all
atoms to false or otherwise $x^i_1$ to true.

Then we construct $\CKB$ similar as for ${\rm P^{NP}}$-hardness  follows. Let 
$F,T,A,V,Y,O$ be concepts, $P_1,P_2,P_3,N_1,N_2,N_3,C,R$ be
roles, and $a$ and $x^k_1,\ldots,x^k_n,$ $c^k_1,\ldots,c^k_{m_i}$ be
individual constants for the variables and clauses in $E_i$, respectively.
We use totally ordered contexts $\smlc_0 < \smlc_1 < \smlc_2$. The  knowledge bases of the contexts contain the following axioms

\begin{itemize}
\itemsep=0pt
 \item the knowledge base of $\smlc_2$ contains the  defeasible axioms
    $D(V \isa F)$.

 \item the knowledge base of $\smlc_1$ contains the defeasible axiom $D(V \isa T)$
    
 \item the knowledge base of  $\smlc_0$ that contains the inclusion axioms: 

$T\sqcap F \isa \bot$,\quad $T \isa A$,\quad $F \isa A$,\quad
    $\bigsqcap_{j=1}^3\exists N_j.(T\sqcap A) \isa \bot$, \quad
$\bigsqcap_{j=1}^3\exists P_j.(F\sqcap A) \isa \bot$, 
\quad $T \sqcap \exists C.F \sqsubseteq Y$, \quad $O \sqsubseteq \exists R.Y$ 

\noindent
 and the assertions 

 \item $V(x^k_j)$, for all $i$ and $j$,
 \item $P_j(c^k_i,x^k_{i_j})$ for $i=1,\ldots,m$ and $j=1,2,3$ such that
     the clause $\gamma^k_i$  of $E_k$ is of form $x^k_{i_1}\lor x^k_{i_2}\lor x^k_{i_3}$,
 \item $N_j(c^k_i,x^k_{i_j})$ for $i=1,\ldots,m$ and $j=1,2,3$ such that
     the clause $\gamma^k_i$ is of form $\neg x^k_{i_1}\lor \neg
    x^k_{i_2}\lor \neg x^k_{i_3}$,
 \item $C(x^{k+1}_1,x^{k+2})$, $R(a,x^{2k+1}_1)$ for $k=0,2,\ldots,l-2$. 
\end{itemize}
Intuitively, we must at context $\smlc_0$ make for each $x^k_j$ an exception to
either $V \isa F$ or $V \isa T$; the respective single minimal clashing set is
$\{ V(x^k_i), \neg F(x^k_i) \}$ resp.\ $\{ V(x^k_i), \neg T(x^k_i) \}$. 

One can show that the (preference-less) CKR- $\IC_{\CAS}$ of the CKR correspond 1-1 to the
combinations of satisfying assignments $\sigma_1,\ldots,\sigma_l$ of
$E_1,\ldots,E_l$, respectively.  Furthermore, under
profile-based preference, keeping $V \sqsubseteq T$ is preferred
over keeping $V \sqsubseteq F$, and thus by the context ordering for
each $E^k$ an assignment $\sigma_k$ that sets $x^k_1$ to true.

In case an odd number of $x^k_1$ is set to true, for some
$x^{2k'+1}_1$  from the assertion $C(x^{2k'+1}_1,x^{2k'+2})$ and 
$T \sqcap \exists C.F \sqsubseteq Y$, one can derive $Y(x^{2k'+1})$,
and then from $R(a,x^{2k+1}_1)$ and the axiom $O \sqsubseteq \exists R.Y$ 
that $O(a)$ holds. On the other hand, $O(a)$ can not be derived if 
an even number of $x^k_1$ is set to true.

Consequently, $\CKB \models \smlc_0{:}O(a)$ holds iff the instance of
ODD SAT is a yes-instance. This  shows $\Delta^p_2[O(\log n)]$-hardness.
\end{proof}

\medskip

We remark that the reduction in the proof establishes coNP-hardness of
model checking under data complexity: if we consider $l=2$ and
an $E_2$ that is satisfied only if all atoms are set to false, then
for the clashing assumption $\chi$ consisting of  $\stru{V\sqsubseteq
T, x^k_i}$  for all atoms $x^k_j$ gives rise to a (canonical)
CKR-model $\IC_{\CAS}$ of $\CKB$ that can be constructed in polynomial
time, and moreover  $\IC_{\CAS}$ is preferred iff $E_1$ is
unsatisfiable; this shows coNP-hardness (a simpler, direct
construction for $E_1$ is clearly possible).

\medskip

That CQ entailment remains $\Pi^p_2$-complete under data complexity is
a simple consequence that membership in $\Pi^p_2$ holds for the
general case, and that the inherited $\Pi^p_2$-hardness of
CQ-answering for ordinary CKR knowledge bases (without context hierarchies) holds under data complexity.

\subsection{Complexity of $\smlc$-Entailment under local preference}

As for local preference at a context $\smlc$, let for any context
$\smlc'$ above $\smlc$ denote $X_{\smlc}(\smlc')$ the set of 
all clashing assumptions  $\stru{\alpha, \ee}$ for defeasible axioms
at $\smlc'$ made at $\smlc$ in some CKR-model of $\CKB$.

Call a context $\smlc'$ a connector for $\smlc$, if it  
directly covers
$\smlc$ and for every $\smlc''$ and $\smlc'''$, if 
and $\smlc''$ covers $\smlc'''$, then 
$\smlc'''$ covers $\smlc'$ (i.e., every path in 
the covers-graph from a node $\smlc''$ above $\smlc'$ to $\smlc$ must
pass through $\smlc'$).

Consider the following property of the local preference $>$:

\begin{itemize}
\item[](CP)  If $\smlc'$ is a connector for $\smlc$ and
 (i) $X_{\smlc}(\smlc')\subseteq \chi_1(\smlc)$,  
 (ii) $X_{\smlc}(\smlc')\not\subseteq \chi_2(\smlc)$, and 
 (iii)  $\chi_1(\smlc'') = \chi_2(\smlc'')$, for each $\smlc'' > \smlc$
  such that $\smlc'' \not> \smlc'$, then $\chi_2(c) > \chi_1(\smlc)$.
\end{itemize}
That is, the worst possible overriding at a connector for $\smlc$ is
always less preferred, if the clashing assumptions agree on the
contexts that are not above $\smlc'$ or reachable from some such context. This condition seems to be
plausible for local preference.

Let the {\em global preference}\/ on CAS-models 
$\IC^1_{\CAS} = \stru{\IC, \casmap_1}$, 
$\IC^2_{\CAS} = \stru{\IC, \casmap_2}$ 
induced by a local preference $\chi_1(\smlc) > \chi_2(\smlc)$ on clashing
assumptions of contexts $\smlc$ be 
as follows: $\IC^1_{\CAS}$ is preferred
to $\IC^2_{\CAS}$, if
there exists some $\mlc \in \N$ s.t. $\casmap_1(\mlc) > \casmap_2(\mlc)$ and
for no context $\mlc' \neq \mlc \in \N$ it holds that $\casmap_1(\mlc') < \casmap_2(\mlc')$.

\begin{theorem}
Suppose  $\CKB$ is a sCKR with global preference induced by a local 
preference $>$ that is polynomial-time decidable and satisfies (CP).
Then $\smlc$-entailment $\CKB\models \smlc:\alpha$ is
$\Pi^p_2$-complete. Furthermore, the $\Pi^p_2$-hardness even holds for ranked
hierarchies with three levels.
\end{theorem}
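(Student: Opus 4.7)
My plan addresses the upper and lower bounds separately.

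For \textbf{membership in $\Pi^p_2$}, I would argue that the complement lies in $\Sigma^p_2$. To refute $\CKB\models \smlc\colon\alpha$, guess a CAS-interpretation $\IC_{\CAS} = \stru{\IC,\casmap}$ and then, with a single NP-oracle call, verify that (i) $\IC_{\CAS}$ is a justified CAS-model of $\CKB$ (polynomial in the tractable $\SROIQrl$ fragment used throughout the paper), (ii) $\IC_{\CAS}\not\models \smlc\colon\alpha$ (polynomial DL check), and (iii) no justified CAS-model $\IC'_{\CAS}$ is strictly globally preferred to $\IC_{\CAS}$. Clause (iii) is in coNP: the oracle existentially guesses $\IC'_{\CAS}$, checks it is justified, and verifies strict global preference using polynomially many calls to $>$, which is polynomial-time decidable by assumption. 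This argument never invokes (CP); polynomial-time decidability of $>$ alone suffices.

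For \textbf{$\Pi^p_2$-hardness at three levels}, I would reduce from $\forall\vec{x}\exists\vec{y}.\,\phi(\vec{x},\vec{y})$, where $\phi$ is a monotone 3CNF, using a linear hierarchy $\smlc_0<\smlc_1<\smlc_2$. The rigid part at $\smlc_0$ recycles the ``CNF gadget'' of the $\Delta^p_2$-hardness reduction above: the axioms $T\sqcap F\isa\bot$, $T\isa A$, $F\isa A$, together with $\bigsqcap_{j=1}^{3}\exists N_j.(T\sqcap A)\isa\bot$ and $\bigsqcap_{j=1}^{3}\exists P_j.(F\sqcap A)\isa\bot$ encoding positive and negative clauses, plus the $V(z)$, $P_j(c_i,\cdot)$, $N_j(c_i,\cdot)$ assertions for every variable $z\in\vec{x}\cup\vec{y}$ and every clause $\gamma_i$. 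At $\smlc_1$ I place the defeasibles $D(V\isa T)$ and at $\smlc_2$ the defeasibles $D(V\isa F)$, so justified CAS-models of $\CKB$ correspond bijectively to the satisfying assignments of $\phi$. The crux is then a local preference $>$ at $\smlc_0$ that (a) is polynomial-time decidable, (b) satisfies (CP), and (c) makes clashing assumptions that differ only on $\vec{x}$-variables incomparable while strictly preferring those whose $\vec{y}$-part satisfies $\phi$. With such $>$, the preferred CKR-models are exactly those combining an arbitrary $\vec{x}$-assignment with a $\vec{y}$-assignment that satisfies $\phi$ whenever one exists, so taking $\alpha$ to encode ``the chosen assignment satisfies $\phi$'' via a fresh witness individual yields $\CKB\models\smlc_0\colon\alpha$ iff $\forall\vec{x}\exists\vec{y}.\,\phi$ is valid.

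The \textbf{main obstacle} I foresee is verifying (CP) for this hand-crafted preference: with three linearly ordered contexts, both $\smlc_1$ and $\smlc_2$ are connectors for $\smlc_0$, so (CP) forces a strict local ordering whenever the overrides from a connector are subset-ordered between the two clashing assumptions and the remaining contexts agree. To keep different $\vec{x}$-assignments incomparable, I would ensure that the $\vec{x}$-overrides have identical restrictions to the connector sets $X_{\smlc_0}(\smlc_1)$ and $X_{\smlc_0}(\smlc_2)$ across all $\vec{x}$-assignments, so that the antecedent of (CP) is vacuous between them, and arrange the $\vec{y}$-overrides so that the orderings forced by (CP) coincide with ``prefer $\vec{y}$ that satisfies $\phi$''. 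Should this shaping prove too rigid, I would fall back on inheriting $\Pi^p_2$-hardness from ordinary CKR $\smlc$-entailment with a minimal (CP)-compliant preference that leaves every justified CAS-model preferred.
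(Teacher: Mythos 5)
Your membership argument matches the paper's: guess a justified CAS-model refuting $\smlc\,{:}\,\alpha$ and use an NP oracle to certify that no globally preferred CKR-model exists; as you note, (CP) plays no role there. The gap is in the hardness part. The theorem asserts $\Pi^p_2$-hardness for \emph{every} polynomial-time decidable local preference satisfying (CP) (e.g., for local profile-based preference), so the reduction may not choose the preference: it must work knowing nothing about $>$ beyond property (CP). Your plan instead hand-crafts a preference that ``strictly prefers clashing assumptions whose $\vec{y}$-part satisfies $\phi$'' and makes $\vec{x}$-differences incomparable; even if such a relation could be made uniform and (CP)-compliant, this would only show hardness for that one engineered preference, not the stated result. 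This is precisely why the paper reduces from the contrived problem $\forall X\exists Y\neq\mu\, E(X,Y)$: the distinguished assignment $\mu$ is encoded entirely in the defeasible axioms of a single connector context $\smlc_{Y'}$, so the CKR-model realizing $\mu$ has a clashing assumption containing the whole set $X_{\smlc_0}(\smlc_{Y'})$, and (CP) --- the only leverage available over an arbitrary $>$ --- then forces that model to be non-preferred exactly when some $Y$-assignment other than $\mu$ satisfies $E$; incomparability across $X$-assignments is obtained structurally, by duplicating $X$ into $X'$ with $x_i\leftrightarrow\neg x'_i$ and separate context chains $\smlc_0<\smlc_p<\smlc_{p'}$, not by stipulating it in the preference.

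Two further concrete problems. First, your fallback --- inheriting $\Pi^p_2$-hardness from ordinary CKR with a trivial (CP)-compliant preference --- fails: for ordinary CKR (and for two-level hierarchies, as the paper notes) $\smlc$-entailment of an axiom is only coNP-complete; the $\Pi^p_2$-hardness inherited from ordinary CKR concerns conjunctive-query entailment, not $\smlc$-entailment. Second, in your linear chain $\smlc_0<\smlc_1<\smlc_2$ the context $\smlc_2$ is \emph{not} a connector for $\smlc_0$ (a connector must directly cover $\smlc_0$), so (CP) gives you no constraint relating overridings of $D(V\isa F)$ at $\smlc_2$; the paper's three-level claim instead uses a branching ranked hierarchy with one connector $\smlc_{y'_j}$ (and a sibling $\smlc_{\neg y'_j}$ above it) per variable.
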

\begin{proof}[Sketch]
The membership in $\Pi^p_2$ follows by a guess an check argument,
as we can guess a CKR-model
$\IC_{\CAS}$ of $\CKB$ such that (i)  $\IC_{\CAS} \not\models
\smlc\,{:}\,\alpha$  and (ii) no CKR-model
$\IC'_{\CAS}$ of $\CKB$ exists such that $\IC'_{\CAS}> \IC_{\CAS}$. As
local model checking in absence of preferences is polynomial, and
local preference is polynomial decidable, $\CKB\models \smlc:\alpha$
is decidable in non-deterministic polynomial time with an NP oracle,
and thus in $\Sigma^p_2$, which implies the result.

The $\Pi^p_2$-hardness of $\smlc$-entailment under the given assertion
can be shown by a
reduction from  evaluating a QBF of the form $\forall X\exists Y
\neq \mu E(X,Y)$, where w.l.o.g. 
$E = \bigwedge_{i=1}^m \gamma_i$  is a monotone 3CNF (each clause is
either positive or negative and has size 3, with duplicate literals
allowed), and $Y = X'Y'$, where $E$ contains the clauses
$x_i \lor x'_i$, $\neg x_i \lor \neg x'_i$ (i.e.,  $x_i
\leftrightarrow \neg x'_i$) for each $x_i \in X$, and $\mu$ is a
particular assignment to $Y$ such that $\forall X E(X,\mu(Y))$
evaluates to true.

We construct a CKR $\CKB$ as follows.
We use contexts $\smlc_0$ and 
$\smlc_p$, $\smlc_{\hat{p}}$,  for all $p \in X\cup X'$ and
$\smlc_{Y'}$, $\smlc_{\neg Y'}$. 
The context ordering is

\begin{itemize}
            \item   $\smlc_0 < \smlc_p < \smlc_{p'}$,  for all $p\in
                        X\ \cup X'$, 
             \item $\smlc_0 <\smlc_{Y'} < \smlc_{\neg Y'}$.
\end{itemize}

 Let 
$V_i$, $i=1,\ldots,2|X|+|Y|$ and $F,T,A$ be concepts, $P_1,P_2,P_3,N_1,N_2,N_3$ be
roles, and $p \in X \cup X'$, $y_j\in Y$ and $c_1,\ldots,c_m$ be individual constants.

The  knowledge bases of the contexts contain the following axioms

\begin{itemize}
\itemsep=0pt
 \item the knowledge base of $\smlc_{p}$ (resp.,  $\smlc_{p'}$) contains the  defeasible axiom
$D(V_i \isa T)$ (resp., $D(V_i \isa p)$)  if $p=x_i$ resp.\ if
                        $p=x'_j$ and $i=2j$;

 \item the knowledge base of $\smlc_{Y'}$ 
          the defeasible axioms $D(V_{2|X|+j} \isa T)$ if $\mu(y_j) =$
          true, and $D(V_{2|X|+j} \isa F)$ otherwise,
           for all $j=1,\ldots,|Y'|$
    
 \item the knowledge base of $\smlc_{\neg Y'}$ 
          the defeasible axioms $D(V_{2|X|+j} \isa F)$ if $\mu(y_j) =$
          true, and $D(V_{2|X|+j} \isa T)$ otherwise,
           for all $j=1,\ldots,|Y'|$
    
 \item the knowledge base of  $\smlc_0$ that contains the inclusion axioms: 

$T\sqcap F \isa \bot$,\quad $T \isa A$,\quad $F \isa A$,\quad $\bigsqcap_{j=1}^3\exists N_j.(T\sqcap A) \isa \bot$, \quad and\ \ $\bigsqcap_{j=1}^3\exists P_j.(F\sqcap A) \isa \bot$,

 \noindent
 and the assertions 

 \item $V_h(x_h)$,  $V_{n+h}(x'_h)$, $h=1,\ldots,n$, and  
$V_{2|X|+j}(y_j)$  $j=1,\ldots,|Y'|$, 

 \item $P_j(c_i,v_{i_j})$ for $i=1,\ldots,m$ and $j=1,2,3$ such that
     the clause $\gamma_i$ is of form $v_{i_1}\lor v_{i_2}\lor v_{i_3}$,
 \item $N_j(c_i,v_{i_j})$ for $i=1,\ldots,m$ and $j=1,2,3$ such that
     the clause $\gamma_i$ is of form $\neg v_{i_1}\lor \neg v_{i_2}\lor \neg v_{i_3}$.
\end{itemize}

Informally,  either $V_p \isa T$ or $V_p \isa F$ is overridden in each
CKR-model for each $p=x_i$ resp. $p=x'_i$, which correspond
to truth assignments to $X$ and $X'$; as $x_i \leftrightarrow \neg
x'_i$, justified CAS-models are only comparable for preference if the
correspond to the same assignment. On the other hand, by the
assumption of $\mu$ for $E$, we have some CKR-model $\IC_{\CAS}$ in which all overriding of
$V_j \isa T$ or $V_i \isa F$  for atoms $y_j$ happens for 
the axioms at $\smlc_{Y'}$.  That is, 
the clashing assumption of $\IC_{\CAS}$ includes the 
set $X_{\smlc}(\smlc')$ of clashings assumptions defined above 
for $\smlc=\smlc_0$ and $\smlc'=\smlc_{Y'}$.

We may assume that if for a given assignment $\sigma$ to $X\cup X'$
some other assignment $\mu'$ to $Y'$ exists that makes $E(\sigma(X\cup
X'),Y')$ true, then some fixed variable $y_i$ must in $\mu'$ have,
regardless of $\sigma$, a different value than in $\mu$.

Note that $\smlc_{y'}$ is a connector of $\smlc$ (as is $\smlc_p$ for
every atom $p$). Under the assumption that the local preference $>$ satisfies the
property (CP), it follows that the corresponding CKR-model $\IC'_{\CAS}$ will then be
preferred to the model $\IC_{\CAS}$ for $\sigma,\mu$.

Consequently,
$\CKB\models \smlc_0\,{:}\,T(y_i))$  respectively
$\CKB\models \smlc_0\,{:}\,F(y_i))$  holds iff the formula
$\forall X\exists Y \neq \mu E(X,Y)$ evaluates to true. This shows
$\Pi^p_2$-hardness of $\smlc$-entailment under a global preference
induced by any polynomial-time decidable local preference
$\chi_2(\smlc)> \chi_1(\smlc)$ that
satisfies (CP). A particular such preference is e.g.\ profile-based
based preference a the local level.

We note that the contexts $\smlc_{Y'}$ and $\smlc_{\neg Y'}$ can be
replaced by copies $\smlc_{y'_j}$ and $\smlc_{\neg y'_j}$, for all
$y'_j \in Y'$; each  $\smlc_{y'_j}$ is a connector. Thus, the
$\Pi^p_2$-hardness carries over to the case of a ranked hierarchy with
three levels. In case of two levels, no context-sensitive overriding
is possible and the setting is subsumed by the one of ordinary CKR,
for which $\smlc$-entailment is coNP-complete.
\end{proof}

\section{Translation rule set tables}

Rule sets for the proposed translation are shown in the tables in following pages.\linebreak
$\SROIQrl$ input and deduction rules are presented in Table~\ref{tab:rl-rules-tgl}.
Table~\ref{tab:global-local-rules-tgl} lists global and local translations
and output rules.
Table~\ref{tab:input-default-tgl} shows input rules for defeasible axioms.
Overriding rules are shown in Table~\ref{tab:ovr-rules-tgl},
defeasible inheritance rules are reported in Table~\ref{tab:inheritance-rules-tgl}
and test rules are shown in Table~\ref{tab:test-rules-tgl}.
Finally, the newly introduced rules and constraints for overriding level preference 
are shown in Table~\ref{tab:pref-rules}.

\begin{table}[tp]%
\caption{$\SROIQrl$ input and deduction rules}
\vspace{2ex}
\hrule\mbox{}\\ 
\textbf{$\SROIQrl$ input translation $I_{rl}(S,c)$}\\[.5ex]
\scalebox{.9}{
\small
$\begin{array}[t]{l@{\ \ }l}               
\mbox{(irl-nom)} 
& a \in \NI \mapsto \{\nom(a,c)\}\\
\mbox{(irl-cls)} 
& A \in \NC \mapsto \{\cls(A,c)\}\\
\mbox{(irl-rol)} 
& R \in \NR \mapsto \{\rol(R,c)\}\\[1ex]

\mbox{(irl-inst1)} 
& A(a) \mapsto \{\insta(a,A,c,\ml{main})\} \\
\mbox{(irl-inst2)} 
& \non A(a) \mapsto \{\ninsta(a,A,c)\} \\
\mbox{(irl-triple)} 
& R(a,b) \mapsto \{\triplea(a,R,b,c,\ml{main})\} \\
\mbox{(irl-ntriple)} & \non R(a,b) \mapsto \{\ntriplea(a,R,b,c)\} \\
\mbox{(irl-eq)} & a = b \mapsto \{\peq(a,b,c,\ml{main})\} \\
\mbox{(irl-neq)} & a \neq b \mapsto \emptyset \\
\mbox{(irl-inst3)} 
& \{a\} \subs B \mapsto \{\insta(a,B,c, \ml{main})\} \\
\mbox{(irl-subc)} 
& A \subs B \mapsto \{\subClass(A,B,c)\} \\
\mbox{(irl-top)} & \top(a) \mapsto \{\insta(a,\ptop,c)\} \\
\mbox{(irl-bot)} & \bot(a) \mapsto \{\insta(a,\pbot,c)\}

\end{array}$
\;
$\begin{array}[t]{l@{\ \ }l}               
\mbox{(irl-subcnj)} 
& A_1 \sqcap A_2 \subs B \mapsto \{\subConj(A_1,A_2,B,c)\} \\
\mbox{(irl-subex)} 
& \exists R.A \subs B \mapsto \{\subEx(R,A,B,c)\} \\[1ex]

\mbox{(irl-supex)} &  A \subs \exists R.\{a\} \mapsto \{\supEx(A,R,a,c)\}\\
\mbox{(irl-forall)} &  A \subs \forall R.B \mapsto \{\supForall(A,R,B,c)\} \\
\mbox{(irl-leqone)} &  A \subs {\leqslant} 1 R.\top \mapsto \{\supLeqOne(A,R,c)\} \\[1ex]
        
\mbox{(irl-subr)} 
& R \subs S \mapsto \{\subRole(R,S,c)\}\\
\mbox{(irl-subrc)} 
& R {\circ} S \subs T \mapsto \{\subRChain(R,S,T,c)\}\\
\mbox{(irl-dis)} & \mathrm{Dis}(R,S)  \mapsto \{\pDis(R,S,c)\}\\
\mbox{(irl-inv)} & \mathrm{Inv}(R,S) \mapsto \{\pInv(R,S,c)\}\\
\mbox{(irl-irr)} & \mathrm{Irr}(R) \mapsto \{\pIrr(R,c)\}\\
  \end{array}$}\\[.3ex]
  
\textbf{$\SROIQrl$ deduction rules $P_{rl}$}\\[.5ex]
\scalebox{.9}{
\small
$\begin{array}{l@{\;}r@{\ }r@{\ }l@{}}
	 \mbox{(prl-instd)} & \instd(x,z,c,t) & \rif & \insta(x,z,c,t).\\
	 \mbox{(prl-tripled)} & \tripled(x,r,y,c,t) & \rif & \triplea(x,r,y,c,t).\\[1ex]

	 \mbox{(prl-ninstd)} & \unsat(t) & \rif & \ninsta(x,z,c), \instd(x,z,c,t).\\
	 \mbox{(prl-ntripled)} & \unsat(t) & \rif & \ntriplea(x,r,y,c), \tripled(x,r,y,c,t).\\[1ex]

   \mbox{(prl-eq)} & \unsat(t) & \rif & \peq(x,y,c,t).\\
   \mbox{(prl-top)} & \instd(x,\ptop,c,\ml{main}) & \rif & \nom(x,c).\\	
   \mbox{(prl-bot)} & \unsat(t) & \rif & \instd(x,\pbot,c,t).\\[1ex]
		
   \mbox{(prl-subc)} 
   &      \instd(x,z,c,t) & \rif & \subClass(y,z,c), \instd(x,y,c,t). \\
   \mbox{(prl-subcnj}) 
   & \instd(x,z,c,t) & \rif & \subConj(y_1,y_2,z,c), \instd(x,y_1,c,t),\instd(x,y_2,c,t). \\
   \mbox{(prl-subex)} 
   & \instd(x,z,c,t) & \rif & \subEx(v,y,z,c), \tripled(x,v,x',c,t),\instd(x',y,c,t). \\
   \mbox{(prl-supex)} 
   & \tripled(x,r,x',c,t) & \rif & \supEx(y,r,x',c), \instd(x,y,c,t). \\
   \mbox{(prl-supforall)} 
   & \instd(y,z',c,t) & \rif & \supForall(z,r,z',c), \instd(x,z,c,t), \tripled(x,r,y,c,t).\\
   \mbox{(prl-leqone)} & \unsat(t) & \rif & \supLeqOne(z,r,c), \instd(x,z,c,t),\\
           & & & \tripled(x,r,x_1,c,t),\tripled(x,r,x_2,c,t).\\[1ex]					

   \mbox{(prl-subr)} 
   & \tripled(x,w,x',c,t) & \rif & \subRole(v,w,c), \tripled(x,v,x',c,t). \\
   \mbox{(prl-subrc)} 
   & \tripled(x,w,z,c,t) & \rif & \subRChain(u,v,w,c), \tripled(x,u,y,c,t), \tripled(y,v,z,c,t).\\[1ex]

   \mbox{(prl-dis)} & \unsat(t) & \rif & \pDis(u,v,c), \tripled(x,u,y,c,t), \tripled(x,v,y,c,t).\\	
   \mbox{(prl-inv1)} & \tripled(y,v,x,c,t) & \rif & \pInv(u,v,c), \tripled(x,u,y,c,t). \\
   \mbox{(prl-inv2)} & \tripled(y,u,x,c,t) & \rif & \pInv(u,v,c), \tripled(x,v,y,c,t). \\
   \mbox{(prl-irr)} & \unsat(t) & \rif & \pIrr(u,c),\tripled(x,u,x,c,t).\\[1ex]
   \mbox{(prl-sat)} &   & \rif & \unsat(\main).
  \end{array}$}\\[.5ex]
\hrule\mbox{}
\label{tab:rl-rules-tgl}
\end{table}

\begin{table}[tp]%
\caption{Global, local and output rules}
\label{tab:global-local-rules-tgl}

\medskip

\hrule\mbox{}\\
\textbf{Global input rules $I_{glob}(\Cl)$}\\[.5ex]
\scalebox{1}{
\small
$\begin{array}{l@{\ \ }l}
 \mbox{(igl-covers)} & \mlc_1 \prec \mlc_2 \mapsto \{\pprec(\mlc_1, \mlc_2)\}\\
 \mbox{(igl-level)} & l(\mlc_1) = n \mapsto \{\level(\mlc_1, n + 1)\}\\[1ex]
\end{array}$}

\textbf{Local input rules $I_{loc}(\KB_\mlm, \mlc)$}\\[.5ex]
\scalebox{1}{
\small
$\begin{array}{l@{\ \ }l}
\mbox{(ilc-subevalat)} & \eval(A, \ml{c}_1) \subs B \mapsto \{\subEval(A, \ml{c}_1, B, \mlc)\} \\
\mbox{(ilc-subevalr)} & \eval(R, \ml{c}_1) \subs T \mapsto \{\subEvalR(R, \ml{c}_1, T, \mlc)\}\\[1ex]
\end{array}$}

\textbf{Local deduction rules $P_{loc}$}\\[.5ex]
\scalebox{.95}{
\small
$\begin{array}{l@{\ \ }l}               
\mbox{(plc-subevalat)} & 
\instd(x, b, c, t) \rif \subEval(a, c_1, b, c), \instd(x, a, c_1, t).\\
\mbox{(plc-subevalr)} & 
\tripled(x, s, y, c, t) \rif \subEvalR(r, c_1, s, c), \tripled(x, r, y, c_1, t).\\[1ex]

\mbox{(plc-subevalatp)} & 
\instd(x, b, c, t) \rif \subEval(a, c_1, b, c_2), \instd(x, a, c_1, t), \pprec(c, c_2).\\
\mbox{(plc-subevalrp)} & 
\tripled(x, s, y, c, t) \rif \subEvalR(r, c_1, s, c_2), \tripled(x, r, y, c_1, t), \pprec(c, c_2).\\[1ex]

\end{array}$}

\textbf{Output translation $O(\alpha,\mlc)$}\\[.5ex]
\scalebox{1}{
\small
$\begin{array}{l@{\ \ }l}
\mbox{(o-concept)} & A(a) \mapsto \{\instd(a,A,\mlc,\main)\} \\
\mbox{(o-role)} & R(a,b) \mapsto \{\tripled(a,R,b,\mlc,\main)\} \\[1ex]
\end{array}$}
\hrule
\end{table}

\begin{table}[tp]%
\caption{Input rules $I_{\default}(S,c)$ for defeasible axioms}
\label{tab:input-default-tgl} 

\medskip

\hrule\mbox{}\\[1ex]
{
\small
\scalebox{1}{
$\begin{array}{@{}l@{~}r@{~}l@{}}
 \mbox{(id-inst)} & \default(A(a))  & \mapsto \{\, \definst(A,a,c).\,\} \\
 \mbox{(id-triple)} & \default(R(a,b))  & \mapsto \{\, \deftriple(R,a,b,c).\,\} \\ 
 \mbox{(id-ninst)} & \default(\non A(a))  & \mapsto \{\, \defninst(A,a,c).\,\} \\
 \mbox{(id-ntriple)} & \default(\non R(a,b))  & \mapsto \{\, \defntriple(R,a,b,c).\,\} \\ 

 \mbox{(id-subc)} & \default(A \subs B)  & \mapsto \{\, \defsubs(A,B,c).\,\} \\  
 \mbox{(id-subcnj)} & \default(A_1 \sqcap A_2 \subs B)  & \mapsto \{\, \defsubcnj(A_1, A_2, B,c).\,\} \\   
 \mbox{(id-subex)} & \default(\exists R.A \subs B)  & \mapsto \{\, \defsubex(R, A, B,c).\,\} \\    
 \mbox{(id-supex)} & \default(A \subs \exists R.\{a\})  & \mapsto \{\, \defsupex(A, R, a,c).\,\} \\   
 \mbox{(id-forall)} & \default(A \subs \forall R.B)  & \mapsto \{\, \defsupforall(A, R, B,c).\,\} \\    
 \mbox{(id-leqone)} & \default(A \subs {\leqslant} 1 R.\top)  & \mapsto \{\, \defsupleqone(A, R,c).\,\} \\[2ex]
 \mbox{(id-subr)} & \default(R \subs S)  & \mapsto \{\, \defsubr(R, S,c).\,\} \\    
 \mbox{(id-subrc)} & \default(R \circ S \subs T)  & \mapsto \{\, \defsubrc(A_1, A_2, B,c).\,\} \\    
 \mbox{(id-dis)} & \default(\mathrm{Dis}(R,S))  & \mapsto \{\, \defdis(R, S,c).\,\} \\   
 \mbox{(id-inv)} & \default(\mathrm{Inv}(R,S))  & \mapsto \{\, \definv(R,S,c).\,\} \\   
 \mbox{(id-irr)} & \default(\mathrm{Irr}(R))  & \mapsto \{\, \defirr(R,c).\,\} \\    
\end{array}$}}\\[1ex]
\hrule
\end{table}

\begin{table}[tp]%
\caption{Deduction rules $P_{\default}$ for defeasible axioms: overriding rules}
\label{tab:ovr-rules-tgl}

\medskip

\hrule\mbox{}\\[1ex]
\scalebox{.78}{
$\begin{array}{l@{\ \ }r@{\ \ }l}
 \mbox{(ovr-inst)} & 
 \ovr(\insta,x,y,c_1,c)  \rif & \definst(x,y,c_1), \pprec(c,c_1), \naf \testf(\nlit(x,y,c)). \\[0.5ex]
 \mbox{(ovr-triple)} & 
 \ovr(\triplea,x,r,y,c_1,c) \rif & \deftriple(x,r,y,c_1), \pprec(c,c_1), \naf \testf(\nrel(x,r,y,c)). \\[0.5ex]
 \mbox{(ovr-ninst)} & 
 \ovr(\ninsta,x,y,c_1,c)  \rif & \defninst(x,y,c_1), \pprec(c,c_1), \instd(x,z,c,\ml{main}). \\[0.5ex]
 \mbox{(ovr-ntriple)} & 
 \ovr(\ntriplea,x,r,y,c_1,c) \rif & \defntriple(x,r,y,c_1), \pprec(c,c_1), \tripled(x,r,y,c,\ml{main}). \\[0.5ex]
 \mbox{(ovr-subc)} & 
 \ovr(\subClass,x,y,z,c_1,c) \rif &
 \defsubs(y,z,c_1), \pprec(c,c_1), \instd(x,y,c,\ml{main}),\\
  & & \naf \testf(\nlit(x,z,c)). \\[0.5ex]
  \mbox{(ovr-cnj)} & 
  \ovr(\subConj,x,y_1,y_2,z,c_1,c) \rif &  \defsubcnj(y_1,y_2,z,c_1), \pprec(c,c_1), \instd(x,y_1,c,\ml{main}), \\
                          &&  \instd(x,y_2,c,\ml{main}),  \naf \testf(\nlit(x,z,c)).\\[0.5ex]
  \mbox{(ovr-subex)} & 
  \ovr(\subEx,x,r,y,z,c_1,c) \rif &  \defsubex(r,y,z,c_1), \pprec(c,c_1), \tripled(x,r,w,c,\ml{main}), \\
                && \instd(w,y,c,\ml{main}),   \naf \testf(\nlit(x,z,c)).\\[0.5ex]
  \mbox{(ovr-supex)} & 
  \ovr(\supEx,x,y,r,w,c_1,c) \rif &  \defsupex(y,r,w,c_1), \pprec(c,c_1), \\
  &&  \instd(x,y,c,\ml{main}), \naf \testf(\nrel(x,r,w,c)). \\[0.5ex]
  \mbox{(ovr-forall)} & 
  \ovr(\supForall,x,y,z,r,w,c_1,c) \rif & \defsupforall(z,r,w,c_1),
                \pprec(c,c_1), \instd(x,z,c,\ml{main}),\\
                 && \tripled(x,r,y,c,\ml{main}),  \naf \testf(\nlit(y,w,c)). \\[0.5ex]
  \mbox{(ovr-leqone)} & 
  \ovr(\supLeqOne,x,x_1,x_2,z,r,c_1,c) \rif &
  \defsupleqone(z,r,c_1), \pprec(c,c_1), \instd(x,z,c,\ml{main}), \\
  && \tripled(x,r,x_1,c,\ml{main}),\tripled(x,r,x_2,c,\ml{main}),\\[0.5ex]
  \mbox{(ovr-subr)}  & 
  \ovr(\subRole,x,y,r,s,c_1,c) \rif &
  \defsubr(r,s,c_1), \pprec(c,c_1), \tripled(x,r,y,c,\ml{main}), \\
  && \naf \testf(\nrel(x,s,y,c)).\\[0.5ex]                     
  \mbox{(ovr-subrc)}  & 
  \ovr(\subRChain,x,y,z,r,s,t,c_1,c) \rif &   \defsubrc(r,s,t,c_1),
                \pprec(c,c_1), \tripled(x,r,y,c,\ml{main}),\\
                && \tripled(y,s,z,c,\ml{main}),  \naf \testf(\nrel(x,t,z,c)).\\[0.5ex]                     
  \mbox{(ovr-dis)}  & 
  \ovr(\pDis,x,y,r,s,c_1,c) \rif &
  \defdis(r,s,c_1), \pprec(c,c_1), \tripled(x,r,y,c,\ml{main}),\\ 
	&& \tripled(x,s,y,c,\ml{main}).\\
  \mbox{(ovr-inv1)}  & 
  \ovr(\pInv,x,y,r,s,c_1,c) \rif &
  \definv(r,s,c_1), \pprec(c,c_1), \tripled(x,r,y,c,\ml{main}), \\
  && \naf \testf(\nrel(x,s,y,c)).\\[0.5ex]
  \mbox{(ovr-inv2)} & \ovr(\pInv,x,y,r,s,c_1,c) \rif &
  \definv(r,s,c_1), \pprec(c,c_1), \tripled(y,s,x,c,\ml{main}), \\
  &&  \naf \testf(\nrel(x,r,y,c)).\\[0.5ex]
  \mbox{(ovr-irr)}  & 
  \ovr(\pIrr,x,R,c_1,c) \rif &
  \defirr(r,c_1), \pprec(c,c_1), \tripled(x,r,x,c,\ml{main}).\\[1.5ex]
\end{array}$}
\hrule
\end{table}

\begin{table}[tp]%

\bigskip

\caption{Deduction rules $P_{\default}$ for defeasible axioms: inheritance rules}
\label{tab:inheritance-rules-tgl}

\medskip

\hrule\mbox{}\\[1ex]
\scalebox{.9}{
\small
$\begin{array}{l@{\;}r@{\ }r@{\ }l@{}}
   \mbox{(prop-inst)} 
   &   \instd(x,z,c,t) & \rif & \insta(x,z,c_1,t), \pprec(c,c_1), \naf \ovr(\insta,x,z,c_1,c).\\[0.5ex]
   \mbox{(prop-triple)} 
   &   \tripled(x,r,y,c,t) & \rif & \triplea(x,r,y,c_1,t), \pprec(c,c_1), \naf \ovr(\triplea,x,r,y,c_1,c).\\[0.5ex]
   \mbox{(prop-ninst)} 
   &   \unsat(t) & \rif & \ninsta(x,z,c_1,t), \instd(x,z,c,t), \\
   &&& \pprec(c,c_1), \naf \ovr(\ninsta,x,z,c_1,c).\\[0.5ex]
   \mbox{(prop-ntriple)} 
   &   \unsat(t) & \rif & \ntriplea(x,r,y,c_1,t),
   \tripled(x,r,y,c,t),\\
   &&&  \pprec(c,c_1), \naf \ovr(\ntriplea,x,r,y,c_1,c).\\[0.5ex]
   \mbox{(prop-subc)} 
   &    \instd(x,z,c,t) & \rif & \subClass(y,z,c_1), \instd(x,y,c,t),\\
    &&& \pprec(c,c_1), \naf \ovr(\subClass,x,y,z,c_1,c).\\[0.5ex]
   \mbox{(prop-cnj}) 
   & \instd(x,z,c,t) & \rif & \subConj(y_1,y_2,z,c_1), \instd(x,y_1,c,t),\instd(x,y_2,c,t),\\
                          &&& \pprec(c,c_1), \naf  \ovr(\subConj,x,y_1,y_2,z,c_1,c). \\[0.5ex]
\mbox{(prop-subex)} 
   & \instd(x,z,c,t) & \rif & \subEx(v,y,z,c_1), \tripled(x,v,x',c,t),\instd(x',y,c,t),\\
                          &&& \pprec(c,c_1), \naf \ovr(\subEx,x,v,y,z,c_1,c). \\[0.5ex]
   \mbox{(prop-supex)} 
   & \tripled(x,r,x',c,t) & \rif & \supEx(y,r,x',c_1), \instd(x,y,c,t),\\
                          &&& \pprec(c,c_1), \naf \ovr(\supEx,x,y,r,x',c_1,c). \\[0.5ex]
\mbox{(prop-forall)} 
   & \instd(y,z',c,t) & \rif & \supForall(z,r,z',c_1), \instd(x,z,c,t), \tripled(x,r,y,c,t),\\
                          &&& \pprec(c,c_1), \naf \ovr(\supForall,x,y,z,r,z',c_1,c).\\[0.5ex]      
\mbox{(prop-leqone)} & \unsat(t) & \rif & \supLeqOne(z,r,g), \instd(x,z,c,t),\\													
                        &&& \tripled(x,r,x_1,c,t), \tripled(x,r,x_2,c,t),\\
                        &&& \pprec(c,c_1), \naf \ovr(\supLeqOne,x,x_1,x_2,z,r,c_1,c).\\[0.5ex]
   \mbox{(prop-subr)} 
   & \tripled(x,w,x',c,t) & \rif & \subRole(v,w,c_1), \tripled(x,v,x',c,t), \\
                          &&& \pprec(c,c_1), \naf \ovr(\subRole,x,y,v,w,c_1,c). \\[0.5ex]
\mbox{(prop-subrc)} 
   & \tripled(x,w,z,c,t) & \rif & \subRChain(u,v,w,c_1), \tripled(x,u,y,c,t), \tripled(y,v,z,c,t),\\
  &&& \pprec(c,c_1), \naf \ovr(\subRChain,x,y,z,u,v,w,c_1,c). \\[0.5ex]
\mbox{(prop-dis)} 
   & \unsat(t) & \rif & \pDis(u,v,c_1), \tripled(x,u,y,c,t), \tripled(x,v,y,c,t),\\
                          &&& \pprec(c,c_1), \naf \ovr(\pDis,x,y,u,v,c_1,c). \\[0.5ex]             
   \mbox{(prop-inv1)} 
   & \tripled(y,v,x,c,t) & \rif & \pInv(u,v,c_1), \tripled(x,u,y,c,t), \\
                          &&& \pprec(c,c_1), \naf \ovr(\pInv,x,y,u,v,c_1,c). \\
   \mbox{(prop-inv2)} 
   & \tripled(x,u,y,c,t) & \rif & \pInv(u,v,c_1), \tripled(y,v,x,c,t), \\
                         &&& \pprec(c,c_1), \naf \ovr(\pInv,x,y,u,v,c_1,c). \\[0.5ex]              
 \mbox{(prop-irr)} 
   & \unsat(t) & \rif & \pIrr(u,c_1),\tripled(x,u,x,c,t),\\
                      &&& \pprec(c,c_1), \naf \ovr(\pIrr,x,u,c_1,c). \\[1.5ex]
\end{array}$}
\hrule
\end{table}

\begin{table}[tp]%
\caption{Deduction rules $P_{\default}$ for defeasible axioms: test rules}
\label{tab:test-rules-tgl} 

\bigskip

\hrule\mbox{}\\[1ex]
\scalebox{.82}{
\small
$\begin{array}{l@{\!\!\!\!}r@{\ }r@{\ }l@{}}

   \mbox{(test-inst)} 
   &    \test(\nlit(x,y,c)) & \rif & \definst(x,y,c_1), \pprec(c,c_1).\\
   \mbox{(constr-inst)} 
   &  & \rif & \testf(\nlit(x,y,c)), \ovr(\insta,x,y,c_1,c).\\[0.5ex]
   \mbox{(test-triple)} 
   &    \test(\nrel(x,r,y,c)) & \rif & \deftriple(x,r,y,c_1), \pprec(c,c_1).\\[0.5ex]
   \mbox{(constr-triple)} 
   &  & \rif & \testf(\nrel(x,r,y,c)), \ovr(\triplea,x,r,y,c_1,c).\\[1ex]
   \mbox{(test-subc)} 
   &    \test(\nlit(x,z,c)) & \rif & \defsubs(y,z,c_1), \instd(x,y,c,\ml{main}), \pprec(c,c_1).\\[0.5ex]
   \mbox{(constr-subc)} 
   &  & \rif & \testf(\nlit(x,z,c)), \ovr(\subClass,x,y,z,c_1,c).\\[1ex]
   \mbox{(test-subcnj)} 
   &    \test(\nlit(x,z,c)) & \rif & \defsubcnj(y_1,y_2,z,c_1), 
	\instd(x,y_1,c,\ml{main}),\\ 
   &&& \instd(x,y_2,c,\ml{main}), \pprec(c,c_1).\\[0.5ex]
   \mbox{(constr-subcnj)} 
   &  & \rif & \testf(\nlit(x,z,c)), \ovr(\subConj,x,y_1,y_2,z,c_1,c).\\[1ex]
   \mbox{(test-subex)} 
   &    \test(\nlit(x,z,c)) & \rif & \defsubex(r,y,z,c_1), 
	 \tripled(x,r,w,c,\ml{main}), \\
   &&&  \instd(w,y,c,\ml{main}),\pprec(c,c_1).\\[0.5ex]
   \mbox{(constr-subex)} 
   &  & \rif & \testf(\nlit(x,z,c)), \ovr(\subEx,x,r,y,z,c_1,c).\\[1ex]
   \mbox{(test-supex)} 
   &    \test(\nrel(x,r,w,c)) & \rif & \defsupex(y,r,w,c_1),\instd(x,y,c,\ml{main}),\pprec(c,c_1).\\[0.5ex]
   \mbox{(constr-supex)} 
   &  & \rif & \testf(\nrel(x,r,w,c)), \ovr(\supEx,x,r,y,w,c_1,c).\\[1ex]
   \mbox{(test-supforall)} 
   &    \test(\nlit(y,w,c)) & \rif & \defsupforall(z,r,w,c_1), \instd(x,z,c,\ml{main}), \\
   &                        &      & \tripled(x,r,y,c,\ml{main}),\pprec(c,c_1).\\[0.5ex]
   \mbox{(constr-supforall)} 
   &  & \rif & \testf(\nlit(y,w,c)), \ovr(\supForall,x,y,z,r,w,c_1,c).\\[1ex]   
   \mbox{(test-subr)} 
   &    \test(\nrel(x,s,y,c)) & \rif & \defsubr(r,s,c_1), \tripled(x,r,y,c,\ml{main}), \pprec(c,c_1).\\[0.5ex]
   \mbox{(constr-subr)} 
   &  & \rif & \testf(\nrel(x,s,y,c)), \ovr(\subRole,x,r,y,s,c_1,c).\\[1ex]
   \mbox{(test-subrc)} 
   &    \test(\nrel(x,t,z,c)) & \rif & \defsubrc(r,s,t,c_1),
   \tripled(x,r,y,c,\main),\\
   &                          &      & \tripled(y,s,z,c,\main), \pprec(c,c_1).\\[0.5ex]
   \mbox{(constr-subrc)} 
   &  & \rif & \testf(\nrel(x,t,z,c)), \ovr(\subRChain,x,y,z,r,s,t,c_1,c).\\[1ex]
   \mbox{(test-inv1)} 
   &    \test(\nrel(x,s,y,c)) & \rif & \definv(r,s,c_1), \tripled(x,r,y,c,\main), \pprec(c,c_1).\\
   \mbox{(test-inv2)} 
   &    \test(\nrel(y,r,x,c)) & \rif & \definv(r,s,c_1), \tripled(x,s,y,c,\main), \pprec(c,c_1).\\[0.5ex]
   \mbox{(constr-inv1)} 
   &  & \rif & \naf \testf(\nrel(x,s,y,c)), \ovr(\pInv,x,y,r,s,c_1,c).\\
   \mbox{(constr-inv2)} 
   &  & \rif & \naf \testf(\nrel(y,r,x,c)), \ovr(\pInv,x,y,r,s,c_1,c).\\[1ex]
	
   \mbox{(test-fails1)} 
   &  \testf(\nlit(x,z,c)) & \rif & \instd(x,z,c,\nlit(x,z,c)), \naf \unsat(\nlit(x,z,c)).\\
   \mbox{(test-fails2)} 
   &  \testf(\nrel(x,r,y,c)) & \rif & \tripled(x,r,y,c,\nrel(x,r,y,c)), \naf \unsat(\nrel(x,r,y,c)).\\[1ex]

   \mbox{(test-add1)} 
   &  \instd(x, z, c, \nlit(x,z,c)) & \rif & \test(\nlit(x,z,c)).\\
   \mbox{(test-add2)} 
   &  \tripled(x, r, y, c, \nrel(x,r,y,c)) & \rif & \test(\nrel(x,r,y,c)).\\[1ex]

   \mbox{(test-copy1)} 
   &  \instd(x_1, y_1, c, t) & \rif & \instd(x_1,y_1,c,\ml{main}), \test(t).\\
   \mbox{(test-copy2)} 
   &  \tripled(x_1, r, y_1, c, t) & \rif & \tripled(x_1,r,y_1,c,\ml{main}), \test(t).\\[1ex]

\end{array}$}
\hrule
\end{table}

\begin{table}[tp]%

\bigskip

\caption{Deduction rules $P_{\default}$ for defeasible axioms: preference rules}
\label{tab:pref-rules}

\medskip

\hrule\mbox{}\\[1ex]
\scalebox{.82}{
\small
$\begin{array}{l@{\;}r@{\ }r@{\ }l@{}}
   \mbox{(pref-inst)} 
   &   \ovr\level\_\insta(x,y,c,n) & \rif & \ovr(\insta, x, y, c_1, c), \level(c_1, n).\\
   \mbox{(wconst-inst)} 
   &                               & \leftsquigarrow & \ovr\level\_\insta(x,y,c,n).\ [1:n] \\[0.5ex]

   \mbox{(pref-triple)} 
   &   \ovr\level\_\triplea(x,r,y,c,n) & \rif & \ovr(\triplea, x, r, y, c_1, c), \level(c_1, n).\\
   \mbox{(wconst-triple)} 
   &                               & \leftsquigarrow & \ovr\level\_\triplea(x,r,y,c,n).\ [1:n] \\[0.5ex]

   \mbox{(pref-ninst)} 
   &   \ovr\level\_\ninsta(x,y,c,n) & \rif & \ovr(\ninsta, x, y, c_1, c), \level(c_1, n).\\
   \mbox{(wconst-ninst)} 
   &                               & \leftsquigarrow & \ovr\level\_\ninsta(x,y,c,n).\ [1:n] \\[0.5ex]

   \mbox{(pref-ntriple)} 
   &   \ovr\level\_\ntriplea(x,r,y,c,n) & \rif & \ovr(\ntriplea, x, r, y, c_1, c), \level(c_1, n).\\
   \mbox{(wconst-ntriple)} 
   &                               & \leftsquigarrow & \ovr\level\_\ntriplea(x,r,y,c,n).\ [1:n] \\[1ex]

   \mbox{(pref-subc)} 
   &   \ovr\level\_\subClass(x,y,z,c,n) & \rif & \ovr(\subClass, x, y, z, c_1, c), \level(c_1, n).\\
   \mbox{(wconst-subc)} 
   &                               & \leftsquigarrow & \ovr\level\_\subClass(x,y,z,c,n).\ [1:n] \\[0.5ex]

   \mbox{(pref-cnj)} 
   &   \ovr\level\_\subConj(x,y_1,y_2,z,c,n) & \rif & \ovr(\subConj,x,y_1,y_2,z,c_1,c), \level(c_1, n).\\
   \mbox{(wconst-cnj)} 
   &                               & \leftsquigarrow & \ovr\level\_\subConj(x,y_1,y_2,z,c,n).\ [1:n] \\[0.5ex]

   \mbox{(pref-subex)} 
   &   \ovr\level\_\subEx(x,v,y,z,c_1,c,n) & \rif & \ovr(\subEx,x,v,y,z,c_1,c), \level(c_1, n).\\
   \mbox{(wconst-subex)} 
   &                               & \leftsquigarrow & \ovr\level\_\subEx(x,v,y,z,c_1,c,n).\ [1:n] \\[0.5ex]

   \mbox{(pref-supex)} 
   &   \ovr\level\_\supEx(x,y,r,x',c,n) & \rif & \ovr(\supEx,x,y,r,x',c_1,c), \level(c_1, n).\\
   \mbox{(wconst-supex)} 
   &                               & \leftsquigarrow & \ovr\level\_\supEx(x,y,r,x',c,n).\ [1:n] \\[0.5ex]   

   \mbox{(pref-forall)} 
   &   \ovr\level\_\supForall(x,y,z,r,z',c,n) & \rif & \ovr(\supForall,x,y,z,r,z',c_1,c), \level(c_1, n).\\
   \mbox{(wconst-forall)} 
   &                               & \leftsquigarrow & \ovr\level\_\supForall(x,y,z,r,z',c,n).\ [1:n] \\[0.5ex]

   \mbox{(pref-leqone)} 
   &   \ovr\level\_\supLeqOne(x,x_1,x_2,z,r,c,n) & \rif & \ovr(\supLeqOne,x,x_1,x_2,z,r,c_1,c), \level(c_1, n).\\
   \mbox{(wconst-leqone)} 
   &                               & \leftsquigarrow & \ovr\level\_\supLeqOne(x,x_1,x_2,z,r,c,n).\ [1:n] \\[1ex]

   \mbox{(pref-subr)} 
   &   \ovr\level\_\subRole(x,y,v,w,c,n) & \rif & \ovr(\subRole,x,y,v,w,c_1,c), \level(c_1, n).\\
   \mbox{(wconst-subr)} 
   &                               & \leftsquigarrow & \ovr\level\_\subRole(x,y,v,w,c,n).\ [1:n] \\[0.5ex]

   \mbox{(pref-subrc)} 
   &   \ovr\level\_\subRChain(x,y,z,u,v,w,c,n) & \rif & \ovr(\subRChain,x,y,z,u,v,w,c_1,c), \level(c_1, n).\\
   \mbox{(wconst-subrc)} 
   &                               & \leftsquigarrow & \ovr\level\_\subRChain(x,y,z,u,v,w,c,n).\ [1:n] \\[1ex]

   \mbox{(pref-dis)} 
   &   \ovr\level\_\pDis(x,y,u,v,c,n) & \rif & \ovr(\pDis,x,y,u,v,c_1,c), \level(c_1, n).\\
   \mbox{(wconst-dis)} 
   &                               & \leftsquigarrow & \ovr\level\_\pDis(x,y,u,v,c,n).\ [1:n] \\[0.5ex]

   \mbox{(pref-inv)} 
   &   \ovr\level\_\pInv(x,y,u,v,c,n) & \rif & \ovr(\pInv,x,y,u,v,c_1,c), \level(c_1, n).\\
   \mbox{(wconst-inv)} 
   &                               & \leftsquigarrow & \ovr\level\_\pInv(x,y,u,v,c,n).\ [1:n] \\[0.5ex]
   
   \mbox{(pref-irr)} 
   &   \ovr\level\_\pIrr(x,u,c,n) & \rif & \ovr(\pIrr,x,u,c_1,c), \level(c_1, n).\\
   \mbox{(wconst-irr)} 
   &                               & \leftsquigarrow & \ovr\level\_\pIrr(x,u,c,n).\ [1:n]   
\end{array}$}
\hrule
\end{table}

\section{Translation correctness: more details}

Given a CAS-interpretation $\IC_{\CAS} = \stru{\IC, \casmap}$, 
(similarly to the CKR case in~\cite{BozzatoES:18}) 
we can build from its components
a corresponding Herbrand interpretation 
$I(\IC_{\CAS})$ of the program $PK(\CKB)$ as the smallest set of literals containing:

\begin{itemize}
\itemsep=0pt
\item 
all facts of $PK(\CKB)$; 
\item 
 $\instd(a, A, \mlc, \smlmain)$, if $\I(\mlc) \models A(a)$; 
\item 
  $\tripled(a,R,b, \mlc, \smlmain)$, if  $\I(\mlc) \models R(a,b)$;
\item
  each $\ovr$-literal from $\OVR(\IC_{\CAS})$;
\item 
  each literal $l$ with environment $t \neq \smlmain$,
  if $\test(t) \in I(\IC_{\CAS})$ and 
	$l$ is in the head of a rule $r \in \grd(PK(\CKB))$
	with $\Body(r) \subseteq I(\IC_{\CAS})$;
\item
  $\test(t)$, if
  $\testf(t)$ appears in the body of an overriding rule $r$
  in $\grd(PK(\CKB))$ and the head of $r$ is an $\ovr$ literal in
  $\OVR(\IC_{\CAS})$;  
\item
  $\unsat(t) \in I(\IC_{\CAS})$, if
  adding the literal corresponding to $t$ to the local interpretation of its
  context $\mlc$ violates some axiom of the local knowledge $\KB_\mlc$;
\item
  $\testf(t)$, if $\unsat(t) \notin I(\IC_{\CAS})$.
\item
  $\ovr\level(p(\ee), n)$, if the corresponding $\ovr$-literal appears in 
	$\OVR(\IC_{\CAS})$ with $\alpha$ in context $\mlc$ and $\level(\mlc, n) \in PK(\CKB)$.
\end{itemize}
Note that $\unsat(\smlmain)$ is not included in $I(\IC_{\CAS})$.

\begin{lemma}
\label{lem:local-correctness}
Let $\CKB$ be a sCKR in $\SROIQrld$ normal form, then:
	\begin{enumerate}[label=(\roman*).]
	\item 
	  for every (named) justified clashing assumption $\casmap$, 
		the interpretation $S = I(\hat{\IC}(\casmap))$ is an answer set of $PK(\CKB)$;
	\item
	   every answer set $S$ of $PK(\CKB)$ is of the form 
		$S = I(\hat{\IC}(\casmap))$ with $\casmap$ a (named) justified clashing assumption for $\CKB$.
	\end{enumerate}
\end{lemma}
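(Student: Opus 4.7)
The plan is to prove the two directions separately, extending the corresponding correctness lemma for ordinary CKR established in~\cite{BozzatoES:18} with the new components in the translation: the local $\subEval$/$\subEvalR$ rules, the structural predicates $\pprec$ and $\level$, and the preference-encoding predicates (the $\ovr\level$ atoms) together with their associated weak constraints. The overall approach for both directions works by a well-founded induction on the context covers-graph, combined with an inner induction on derivation depth inside each local knowledge module, relying on the known correctness of $P_{rl}$ with respect to local $\SROIQrld$-entailment.

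For part~(i), given a named justified clashing assumption $\casmap$, construct the canonical CAS-model $\hat{\IC}(\casmap)$ and verify that $S := I(\hat{\IC}(\casmap))$ is a model of each group of rules in $PK(\CKB)$, by cases: the $\SROIQrl$ input and deduction rules together with the global and local lifting rules are satisfied by the construction of $\hat{\IC}(\casmap)$ and the known $\SROIQrl$ correctness; the inheritance rules fire precisely for defeasible axioms that are not overridden at the considered context, yielding the intended $\instd$ and $\tripled$ atoms; the overriding rules fire precisely for pairs $\langle\alpha,\ee\rangle \in \casmap$, because the $\naf\,\testf$ guard is satisfied exactly when a minimal-clashing-set witness for the justification condition exists; and the test-block correctly simulates the single-literal inconsistency check. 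Minimality of $S$ under the GL-reduct follows because every literal in $S$ is derivationally supported by construction, and the preference atoms are purely definitional consequences of the $\ovr$-literals, hence cannot break minimality.

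For part~(ii), given an answer set $S$, extract $\casmap$ from the $\ovr$-literals of $S$, show that it is a well-typed named clashing assumption, and verify $S = I(\hat{\IC}(\casmap))$ by matching atom categories: $\smlmain$-environment $\instd$/$\tripled$ atoms with the local DL derivations in each $\I(\mlc)$; $\ovr$ atoms with the elements of $\casmap$; test-environment atoms with the hypothetical single-literal insertions; and $\ovr\level$ atoms by their defining rules. Justification of $\casmap$ then follows because each $\ovr$-literal in $S$ was derived through an overriding rule whose $\naf\,\testf$ guard, together with the $\unsat$/$\testf$ rules, guarantees a minimal clashing set for the corresponding defeasible axiom at the relevant context.

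The main obstacle is controlling the nonmonotonic interplay between the overriding rules (using $\naf\,\testf$) and the test rules (which manufacture hypothetical derivations in environments $t \neq \smlmain$): one must show that the stable model semantics stratifies these correctly, so that $\testf(t)$ holds exactly when the hypothetical assertion labelled $t$ is consistent with the current local theory under the current overridings. This is addressed by observing the separation of environments, namely that no rule with $\smlmain$-environment head depends on atoms whose environment $t \neq \smlmain$ is itself produced by the test block, which turns the $\test$ machinery into a stratified sub-computation layered on top of the $\smlmain$-derivations. A final check confirms that the new preference rules and weak constraints only introduce derived $\ovr\level$ atoms and ranking costs, without pruning any answer sets, so the correspondence between justified clashing assumptions and answer sets established in the ordinary CKR case extends intact to the sCKR setting with local evaluation.
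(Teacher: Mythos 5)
Your proposal is correct and follows essentially the same route as the paper's own proof: both directions reduce to the corresponding correspondence lemma for ordinary CKR in~\cite{BozzatoES:18}, with the new rule groups (local $\subEval$/$\subEvalR$ lifting, $\pprec$/$\level$ facts, and the $\ovr\level$ preference atoms) handled as additional cases in the rule-by-rule verification of the reduct for (i) and in the construction of the CAS-model from the answer set for (ii), and with the observation that the weak constraints do not affect answer-set existence. Your extra remarks on the stratification of the test environments and the well-founded induction over the covers-graph make explicit what the paper delegates to the original proof, but they do not constitute a different argument.
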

\begin{proof}[Sketch]
	Intuitively, as we are interested in computing the correspondence with 
	(not necessarily optimal) answer sets of $PK(\CKB)$ (namely, of the \emph{rules part}
	of the program, not including weak constraints), 
	the newly added weak constraints rules in $P_\default$ do not influence the
	construction of such answer sets and the result can be 
	proved along the lines of Lemma 6 in~\cite{BozzatoES:18}.
	
	Let us consider $S=I(\hat{\IC}(\casmap))$ defined above
	and the reduct  $G_S(PK(\CKB))$ of $PK(\CKB)$ with respect $S$.
  Note that the NAF literals in $PK(\CKB)$ considered in computing such reduct
	involve instances of $\ovr,\testf$ and $\unsat$.
	We can then proceed to prove the lemma by showing that the answer sets of $PK(\CKB)$ 
coincide with the sets $S=I(\hat{\IC}(\casmap))$ where $\casmap$ is a
justified clashing assumption of $\CKB$.

\smallskip
\noindent\textbf{(i).}\ \ \ Assuming that $\casmap$ is a justified clashing assumption,
 we show that $S = I(\hat{\IC}(\casmap))$ is an answer set of $PK(\CKB)$.
 We first that $S \models G_{S}(PK(\CKB))$, that is for every
 rule instance $r \in G_{S}(PK(\CKB))$ it holds that $S \models r$.  We
 can prove this by examining the possible rule forms that
 occur in $G_{S}(PK(\CKB))$.
 Here we show some representative cases (see also~\cite{BozzatoES:18}):
\begin{itemize}
\item
	 \textbf{(prl-instd):}
    then $\insta(a,A,\mlc, t) \in I(\hat{\IC}(\casmap))$ and, by definition of the
		translation, $A(a) \in \KB_\mlc$ (as $t$ can only be $\smlmain$). 
		This implies that $\I(\mlc) \models A(a)$
		and thus\linebreak 
		$\instd(a,A,\mlc,\smlmain)$ is added to $I(\hat{\IC}(\casmap))$.
	\item
	   \textbf{(prl-subc):}
    then $\{\subClass(A,B,\mlc), \instd(a,A,\mlc, t)\} \subseteq I(\hat{\IC}(\casmap))$.
		By definition of the translation we have $A \subs B \in \KB_\mlc$.
		For the construction of $I(\hat{\IC}(\casmap))$, 
		if $t=\smlmain$ then $\I(\mlc) \models A(a)$. 
		This implies that $\I(\mlc) \models B(a)$ 
		and $\instd(a,B,\mlc,t)$ is added to $I(\hat{\IC}(\casmap))$.
		Otherwise, if $t \neq \smlmain$ then
		$\instd(a,B,\mlc,t)$ is directly added to $I(\hat{\IC}(\casmap))$ by its construction.
	\item
	   \textbf{(plc-evalat):}
    then $\{\subEval(A,\mlc_1,B,\mlc), \instd(a,A,\mlc_1, t)\} \subseteq I(\hat{\IC}(\casmap))$.
		Thus we have that $\eval(A,\mlc_1) \subs B \in \KB_\mlc$.
		For the construction of $I(\hat{\IC}(\casmap))$, 
		if $t=\smlmain$ then $\I(\mlc_1) \models A(a)$; 
		This implies that $\I(\mlc) \models B(a)$ 
		and $\instd(a,B,\mlc,t)$ is added to $I(\hat{\IC}(\casmap))$.
		Otherwise, if $t \neq \smlmain$ then
		$\instd(a,B,\mlc,t)$ is directly added to $I(\hat{\IC}(\casmap))$ by its construction.
	\item
	   \textbf{(pref-subc):}
		then $\{\level(\mlc_1, n), \ovr(\subClass,a,A,B,\mlc_1,\mlc)\} \subseteq I(\hat{\IC}(\casmap))$.
		That is, $\ovr(\subClass,a,A,B,\mlc_1,\mlc)$ appears in
		$\OVR(\hat{\IC}(\casmap))$: 
		$\ovr\level\_\subClass(a,A,B,\mlc,n)$ is then added to $I(\hat{\IC}(\casmap))$
		by its construction.
\end{itemize}
Minimality of $S=I(\hat{\IC}(\casmap))$ w.r.t.\ the (positive)
  deduction rules of $G_{S}(PK(\CKB))$ can then be motivated
	as in the original proof in~\cite{BozzatoES:18}:
	thus, $I(\hat{\IC}(\casmap))$ is an answer set of $PK(\CKB)$.

\smallskip
\noindent\textbf{(ii).}\ \ \ 
Let $S$ be an answer set of $PK(\CKB)$.
We show that there is some  justified clashing assumption
$\casmap$ for $\K$ such that $S = I(\hat{\IC}(\casmap))$ holds.

Note that as $S$ is an answer set for the CKR program,
all literals on $\ovr$ and $\testf$ in $S$ are derivable from the
reduct $G_{S}(PK(\CKB))$. 
By the definition of $I(\hat{\IC}(\casmap))$ we can easily build 
a model $\IC_S = \stru{\IC_S, \chi_S}$ from the answer set $S$ as follows:
 for every $c \in \N$,
  we build the local interpretation $\I_S(c)= \stru{\Delta_c, \cdot^{\I(c)}}$ as follows:  
  \begin{itemize}
  \itemsep=0pt
  \item 
    $\Delta_c = \{d \;|\; d \in \NI \}$;  
  \item
	  $a^{\I(c)} = a$, for every $a \in \NI$;
	\item
	  $A^{\I(c)} = \{d \in \Delta_c \mid S \models \instd(d, A, c,\smlmain) \}$, for every $A \in \NC$;
	\item
		$R^{\I(c)} = \{(d,d') \in \Delta_c \times \Delta_c \,|\, 
		S \models \tripled(d, R, d', c,\smlmain) \}$ for $R \in \NR$;
  \end{itemize}
	Finally, $\casmap_S(c) = \{\stru{\alpha, \ee} \mid I_{rl}(\alpha, c') = p, \ovr(p(\ee),c) \in S \}$.
  We have to show that 
  $\IC_{S}$ meets the definition of a least justifed CAS-model for $\CKB$, that is:
  \begin{enumerate}[label=(\roman*)]
\itemsep=0pt
 \item
   for every $\alpha \in \KB_\mlc$ (strict axiom), 
    and $\mlc'\preceq\mlc$, $\I_S(\mlc') \models \alpha$;
  \item
    for every $\default(\alpha) \in \KB_\mlc$ and $\mlc'\prec \mlc$,
    if $\vec{d} \notin \{ \ee \mid \stru{\alpha,\ee} \in \casmap(\mlc')\}$,
    then $\I_S(\mlc') \models \phi_\alpha(\vec{d})$. 
\end{enumerate}

Condition (i) should be proved in the local case where $\mlc'=\mlc$ and
in the ``propagating'' case where $\mlc'\prec\mlc$. The second case can be shown
as a special case of (ii), where overriding to strict axiom is never applicable.
Thus, considering $\mlc'=\mlc$, we verify the condition 
by showing that, for every $\KB_\mlc$, 
we have $\I(c) \models \KB_\scriptmlm$.
This can be shown by cases considering the form of
	all of the axioms $\beta \in \Lcal_\Sigma,\N$ that can occur in $\KB_c$.
For example (the other cases are similar):
	\begin{itemize}
	\item 
	  Let $\beta = A(a) \in \KB_c$, then, by rule (prl-instd),
	  $S \models \instd(a,A,c,\smlmain)$. 
	  This directly implies that $a^{\I(c)} \in A^{\I(c)}$.
	\item 
	  Let $\beta = A \subs B \in \KB_c$, then
	  $S \models \subClass(A,B,c)$. If $d \in A^{\I(c)}$,
	  then by definition $S \models \instd(d,A,c,\smlmain)$:
	  by rule (prl-subc) we obtain that $S \models \instd(d,B,c,\smlmain)$
	  and thus $d \in B^{\I(c)}$.
	\item 
	  Let $\beta = \eval(A, \{\mlc_1\}) \subs B \in \KB_c$, then
	  $S \models \subEval(A,\mlc_1,B,c)$. 
	  If $d \in A^{\I(\mlc_1)}$,
	  then by definition $S \models \instd(d,A,\mlc_1,\smlmain)$
	  and $S \models \instd(c',\ml{C},\ml{gm},\smlmain)$.
	  By rule (plc-evalat) we obtain that $S \models \instd(d,B,c,\smlmain)$:
	  hence, by definition $d \in B^{\I(c)}$.	  
  \end{itemize}		
  To prove condition (ii), let us assume that $\default(\beta) \in \KB_c'$ with $\mlc \prec \mlc'$.
  We can proceed by cases on the possible forms of $\beta$ as in the original proof in~\cite{BozzatoES:18},
	by considering the propagation along the coverage relation. For example:
	  \begin{itemize}
	\item 
	  Let $\beta = A(a)$. Then, by definition of the translation,
	  we have that $S \models \insta(a, A, \mlc',\smlmain)$.
	  Suppose that 
		$\Pair{A(x)}{a} \notin \casmap_{S}(\mlc)$.
	  Then by definition, $\ovr(\insta, a, A, \mlc', \mlc) \notin OVR(\hat{\IC}(\casmap))$.
		Note that we have $S \models \pprec(\mlc, \mlc')$ by construction.
	  By the definition of the reduction, the corresponding instantiation
	  of rule (prop-inst) has not been removed from $G_{S}(PK(\CKB))$: this implies that
	  $S \models \instd(a, A, \mlc,\smlmain)$.
	  By definition, this means that $a^{\I(\mlc)} \in A^{\I(\mlc)}$.
	\item 
	  Let $\beta = A \subs B$. Then, by definition of the translation,
	  we have that $S \models \subClass(A, B, \mlc')$.
		As above, we also have $S \models \pprec(\mlc, \mlc')$.
	  Let us suppose that $b^{\I(\mlc)} \in A^{\I(\mlc)}$: then 
	  $S \models \instd(b, A, \mlc, \smlmain)$.
	  Suppose that $\Pair{A \subs B}{b} \notin \casmap_{S}(\mlc)$:
	  by definition,\linebreak
		$\ovr(\subClass, b, A, B, \mlc', \mlc) \notin OVR(\hat{\IC}(\casmap))$.
	  By the definition of the reduction, the corresponding instantiation
	  of rule (prop-subc) has not been removed from $G_{S}(PK(\CKB))$: 
	  this implies that $S \models \instd(b, B, \mlc, \smlmain)$.
	  Thus, by definition, this means that $b^{\I(\mlc)} \in B^{\I(\mlc)}$.
  \end{itemize}
We have shown that $\IC_S$ is a CAS-model of $\CKB$:
using the same reasoning in the original proof in~\cite{BozzatoES:18}
we can also prove the $\IC_S$ corresponds to the least model 
and that $\chi_S$ is justified, thus proving the result.
\end{proof}

\begin{lemma}
\label{lem:wc-correctness}
  Let $\CKB$ be a sCKR in $\SROIQrld$ normal form with ranked context hierarchy.
	Then, $\hat{\IC}$ is a CKR model of $\CKB$ iff
	there exists a (named) justified clashing assumption $\casmap$ s.t.
	$I(\hat{\IC}(\chi))$ is an optimal answer set of $PK(\CKB)$.
\end{lemma}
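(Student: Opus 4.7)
The plan is to reduce the claim to Lemma~\ref{lem:local-correctness} together with a matching between weak-constraint weights and the profile-based preference on CKR models. First, I would appeal to Lemma~\ref{lem:local-correctness} to obtain the $1{:}1$ correspondence between (named) justified clashing assumptions $\casmap$ of $\CKB$ and answer sets of the rules part of $PK(\CKB)$, via $S = I(\hat{\IC}(\casmap))$. Since the rules in Table~\ref{tab:pref-rules} consist of (pref-$*$) defining rules, which only introduce fresh auxiliary $\ovr\level\_p$ atoms, together with weak constraints, they neither remove nor add answer sets of the rules part: the set of answer sets is unchanged when we include Table~\ref{tab:pref-rules}, and only the selection of \emph{optimal} answer sets via the weak constraints is affected.

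Second, I would match the weights of a candidate answer set to the profile of the corresponding CAS-model. By construction of $I(\hat{\IC}(\casmap))$ and by the (pref-$*$) rules, for each $\ovr$-atom $\ovr(p,\vec{x},\mlc_1,\mlc)\in OVR(\hat{\IC}(\casmap))$ and each pair $\level(\mlc_1,n)\in PK(\CKB)$, exactly one atom $\ovr\level\_p(\vec{x},\mlc,n)$ is present in $S$. Since the hierarchy is ranked, $\level(\mlc_1,n)$ is unique for each $\mlc_1$, so the number of $\ovr\level\_p$-atoms in $S$ at parameter $n$ equals exactly the number of overridings of defeasible axioms residing in level-$n$ contexts in $\casmap$, which is precisely the $n$-th component of the profile of $\hat{\IC}(\casmap)$.

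Third, each weak constraint (wconst-$*$) in Table~\ref{tab:pref-rules} adds weight $1$ at priority level $n$ for each ground $\ovr\level\_p(\cdot,\cdot,n)$ in $S$. By standard ASP semantics, an answer set is optimal iff the vector $(w_N(S),w_{N-1}(S),\ldots,w_0(S))$ of accumulated weights, read from highest to lowest priority, is lexicographically minimal among all answer sets. By the previous step this vector is exactly the profile of $\hat{\IC}(\casmap)$. Minimizing it lexicographically from the top level downwards is precisely the profile-based preference used in the definition of CKR model, so $I(\hat{\IC}(\casmap))$ is an optimal answer set iff $\casmap$ yields a CKR-preferred (hence CKR model) CAS-interpretation.

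Combining the three steps yields both directions of the equivalence. The main delicate point is the accounting in the second step: one must verify that (a) the (pref-$*$) rules generate \emph{exactly one} $\ovr\level\_p$ atom per overriding (no duplication across different defining rules, which is ensured because each defeasible axiom shape has its own dedicated (pref-$*$) and associated $\ovr$-predicate), and (b) the ranked hierarchy guarantees a unique $\level(\mlc_1,n)$ per context, so the weight vector is unambiguously defined and coincides pointwise with the profile. Once this alignment is established, lexicographic optimality under weak constraints and profile-based CKR-preference are literally the same ordering, and the result follows.
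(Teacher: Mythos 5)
Your proposal is correct and follows essentially the same route as the paper's proof: invoke Lemma~\ref{lem:local-correctness} for the answer-set correspondence, observe that the (pref-$*$) rules and weak constraints only affect optimality via the $\ovr$-atoms, identify the per-level violation counts with the components of the profile $p(\casmap)$, and conclude that lexicographic optimization under weak constraints coincides with profile-based preference. Your explicit accounting of one $\ovr\level\_p$ atom per overriding and the uniqueness of $\level(\mlc_1,n)$ in a ranked hierarchy is a slightly more careful rendering of the step the paper carries out via the objective function $H^{PK(\CKB)}$, but it is the same argument.
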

\begin{proof}[Sketch]
	To prove the result, we have to show that, $\hat{\IC}$ is a CKR model iff:
	\begin{enumerate}[label=(\roman*)]
	\item 
	  there exists a (named) justified clashing assumption $\casmap$ s.t.
	  $I(\hat{\IC}(\chi))$ is an answer set of $PK(\CKB)$.
	\item
	  $I(\hat{\IC}(\chi))$ is an optimal answer set of $PK(\CKB)$.
	\end{enumerate}
	Condition (i) is directly derived from Lemma~\ref{lem:local-correctness}
	and the definition of CKR model in Definition 10.
	
	To prove (ii), we have to show the correspondence of the
	lexicographic order on global profiles $p(\chi)$ with the order induced by
	objective function $H^{PK(\CKB)}(S)$ on answer sets. That is,
	$I(\hat{\IC}(\chi))$ is optimal iff
	there does not exist a justified $\chi'$ s.t. $p(\chi') < p(\chi)$.
	
	First of all, we note that weak constraints are only associated to 
	instances of overridings (i.e. $\ovr$ atoms): thus the optimization
	of the answer sets is only dependent on minimization of 
	aspects related to such atoms (which, on the other hand, are related
	to the clashing assumptions in $\chi$).
	
	Suppose that $\chi$ is preferred, that is 
	there does not exist a justified $\chi'$ s.t. $p(\chi') < p(\chi)$.
	Thus, for every such $\chi'$ we have $p(\chi') > p(\chi)$.
	By the definition of lexicographic order on profiles, this means that
	if $p(\chi) = (l_n, \dots, l_0)$ and $p(\chi') = (l'_n, \dots, l'_0)$
	some $j \in \{0,\ldots,n\}$ exists such that 
  $l_n = l'_n$, $l_{n-1} = l'_{n-1}$, \ldots 
  $l_{j+1} = l'_{j+1}$, and $l_j < l'_j$.
	This means that there exist at least an ``additional''
	$\stru{\alpha,\ee} \in \casmap'(\mlc)$ for a context $\mlc$
 such that $l(\alpha) = j$. 
  That is, either all elements in $\chi$ have level smaller than $j$
	or $\chi'$ has more elements at the level $j$.
	Considering then the interpretation $S'= I(\hat{\IC}(\chi'))$, 
	can show that it necessarily has an higher cost with respect 
	$S= I(\hat{\IC}(\chi))$.
	Since $\stru{\alpha,\ee} \in \casmap'(\mlc)$, by construction of $S'$ we have
	that the corresponding $\ovr(p(\ee)) \in S'$ and 
	$\ovr\level(p(\ee),j) \in S'$:
	this causes the instantiation of the weak constraint
	rule in relative to $\ovr\level(p(\ee),j)$, which
	adds a weak constraint violation to $S'$ at level $j$
	and with cost $1$.
	Considering the definition of the optimization function
	$H^{PK(\CKB)}$ from~\cite{LeonePFEGPS:02}:
	\begin{itemize}
	\item 
	  if the violation in $S'$ is at a level bigger than all
		of the violations in $S$, the level function $f_{PK(\CKB)}(j)$
		in the definition of $H^{PK(\CKB)}$ 
		is assured to add an higher cost than all of the lower levels
		$f_{PK(\CKB)}(i)$;
	\item
	  if the violation in $S'$ is at the same level of the (higher) 
		violation in $S$, then the additional cost $1$ of the violation assures
		that level cost of $j$ in $S'$ is bigger than in $S$.
	\end{itemize}
	Thus, we have that in both case $H^{PK(\CKB)}(S') > H^{PK(\CKB)}(S)$.
	This shows the optimality of $I(\hat{\IC}(\chi))$.
	
	The other direction can be shown similarly: supposing that
	$S= I(\hat{\IC}(\chi))$ is optimal, then 
	for all other $S'= I(\hat{\IC}(\chi'))$ we have 
	$H^{PK(\CKB)}(S') > H^{PK(\CKB)}(S)$.
	Thus, by the definition of the function, we have that
	there exists at least a violation on a $\ovr(p(\ee))$
	with higher level or higher level cost at a level $j$.
	Considering the corresponding clashing assumption sets,
	we can analogously map back to the definition of lexicographic ordering 
	on profiles, obtaining that $p(\chi') > p(\chi)$.
	Thus, $\chi$ is preferred and we proved the result.
\end{proof}



  

\newpage
\bibliographystyle{splncs03}
\bibliography{bibliography}


\end{document}